\DeclareMathOperator*{\argmax}{\arg\!\max}
\def\R{\mathbb{R}}
\def\X{\mathcal{X}}
\def\O{\mathcal{O}}
\def\N{\mathcal{N}}
\def\abs#1{\left\lvert#1\right\rvert}
\def\norm#1{\left\lVert#1\right\rVert}
\def\wh#1{\widehat{#1}}
\def\mat#1{\mathbf{#1}}
\def\abovestrut#1{\rule[0in]{0in}{#1}\ignorespaces}
\def\abovespace{\abovestrut{0.20in}}
\begin{document} 

\mainmatter
\titlerunning{Gaussian Process Upper Confidence Bound with Pure Exploration}
\title{Parallel Gaussian Process Optimization with Upper Confidence Bound and Pure Exploration}

\author{Emile Contal\and David Buffoni\and Alexandre Robicquet\and Nicolas Vayatis}
\authorrunning{Contal et al.}

\institute{CMLA, ENS Cachan, CNRS, 61 Avenue du Pr\'{e}sident Wilson, F-94230 Cachan,\\
\{contal, buffoni, vayatis\}@cmla.ens-cachan.fr,\\
alexandre.robicquet@ens-cachan.fr}

\toctitle{Parallel Gaussian Process Optimization with Upper Confidence Bound and Pure Exploration}
\tocauthor{Emile Contal}
\maketitle

\begin{abstract}
In this paper, we consider the challenge of maximizing an unknown function $f$
for which evaluations are noisy and are acquired with high cost.
An iterative procedure uses the previous measures to actively select
the next estimation of $f$ which is predicted to be the most useful.
We focus on the case where the function can be evaluated in parallel
with batches of fixed size 
and analyze the benefit compared to the purely sequential procedure
in terms of cumulative regret.
We introduce the Gaussian Process Upper Confidence Bound and Pure Exploration algorithm
(\textsf{GP-UCB-PE}) which combines the \textsf{UCB} strategy and Pure Exploration in the same batch of evaluations along the parallel iterations.
We prove theoretical upper bounds on the regret with batches of size $K$ for this procedure which show the improvement of the order of $\sqrt{K}$ for fixed iteration cost over  purely sequential versions. Moreover, the multiplicative constants involved have the property of being dimension-free.
We also confirm empirically the efficiency of \textsf{GP-UCB-PE}
on real and synthetic problems compared to state-of-the-art competitors.
\end{abstract} 

\section{Introduction}
\label{intro}
Finding the maximum of a non-convex function by means of sequential noisy observations
is a common task in numerous real world applications. The context of a high dimensional input space  with expensive evaluation cost offers new challenges in order to come up with efficient and valid procedures.
This problem of sequential global optimization arises for example in industrial system design and monitoring 
to choose the location of a sensor to find out the maximum response,
or when determining the parameters of a heavy numerical code designed to maximize the output.
The standard objective in this setting is to minimize the cumulative regret $R_T$,
defined as the sum $\sum_{t=1}^{T} \big( f(x^\star) - f(x_t) \big)$
of the differences between the values of $f$ at the points queried $x_t$
and the true optimum of $f$ noted $x^\star$.
For a fixed horizon $T$, we refer to \cite{hennig2012}.
In the context where the horizon $T$ is unknown,
the query selection has to deal with the exploration/exploitation tradeoff.
Successful algorithms have been developed in
different settings to address this problem such as
experimental design \cite{fedorov1972},
Bayesian optimization \cite{chen2012,guestrin2005,grunewalder2010,srinivas2012,mockus1989,mes2011},
active learning \cite{carpentier2011,chen2013},
multiarmed bandit \cite{auer2002,auer2007,coquelin2007,kleinberg2005,kocsis2006,audibert2011,sutton1998}
and in particular Hierarchical Optimistic Optimization algorithm, \textsf{HOO} \cite{bubeck2008}
for bandits in a generic space, namely $\X$-Armed bandits.
In some cases, it is possible to evaluate the function in parallel with batches of $K$ queries with no increase in cost.
This is typically the case in the sensors location problem if $K$ sensors are available at each iteration,
or in the numerical optimization problem on a cluster of $K$ cores. Parallel strategies have been developed recently in \cite{desautels2012,azimi2010}.
In the present paper, we propose to explore further
the potential of parallel strategies for noisy function optimization with unknown horizon aiming simultaneously at practical efficiency and plausible theoretical results. 
We introduce a novel algorithm called \textsf{GP-UCB-PE} based on the Gaussian process approach
which combines the benefits of the \textsf{UCB} policy with Pure Exploration queries in the same batch of $K$ evaluations of $f$. 
The Pure Exploration component helps to reduce the uncertainty about $f$ in order to support the \textsf{UCB} policy in finding the location of the maximum,
and therefore in increasing the decay of the regret $R_t$ at every timestep $t$.
 In comparison to other algorithms based on Gaussian processes and UCB such as
 \textsf{GP-BUCB} \cite{desautels2012}, the new algorithm discards the need for the initialization phase and offers a tighter control on the uncertainty parameter which monitors overconfidence. As a result, the derived regret bounds do not suffer from the curse of dimensionality since the multiplicative constants obtained are dimension free in contrast with the doubly exponential dependence observed  in previous work.
We also mention that Monte-Carlo simulations can be proposed as an alternative
and this idea has been implemented in the \textit{Simulation Matching} with \textsf{UCB} policy (\textsf{SM-UCB}) algorithm \cite{azimi2010} which we also consider for comparison in the present paper.
Unlike \textsf{GP-BUCB}, no theoretical guarantees for the \textsf{SM-UCB} algorithm are known for the bounds on the number of iterations needed to get close enough to the maximum, therefore the discussion will be reduced to   empirical comparisons over several benchmark problems.
The remainder of the paper is organized as follows. We state the background and our notations in Section \ref{sec:bg}.
We formalize the Gaussian Process assumptions on $f$,
and give the definition of  regret in the parallel setting.
We then describe the GP-UCB-PE algorithm and the main concepts in Section \ref{sec:algo}.
We provide theoretical guarantees through upper bounds
for the cumulative regret of \textsf{GP-UCB-PE} in Section \ref{sec:regret}.
We finally show comparisons of our method and the related algorithms through a series of numerical experiments on real and synthetic functions
in Section \ref{sec:expes}.
\footnote{The documented source codes and the assessment data sets are available online at
  \url{http://econtal.perso.math.cnrs.fr/software/}}

\section{Problem Statement and Background}
\label{sec:bg}
\subsection{Sequential Batch Optimization}
We address the problem of finding in the lowest possible number of iterations
the maximum of an unknown function $f : \X \to \R$ where $\X \subset \R^d$,
denoted by :
\[f(x^\star) = \max_{x \in \X} f(x)~.\]
The arbitrary choice of formulating the optimization problem as a maximization
is without loss of generality,
as we can obviously take the opposite of $f$ if the problem is a minimization one.
At each iteration $t$, we choose a batch of $K$ points in $\X$ called the queries $\{x_t^k\}_{0\leq k<K}$,
and then observe simultaneously the noisy values taken by $f$ at these points,
\[y_t^k = f(x_t^k) + \epsilon_t^k~,\]
where the $\epsilon_t^k$ are independent Gaussian noise $\N(0,\sigma^2)$.

\subsection{Objective}
Assuming that the horizon $T$ is unknown, a strategy has to be good at any iteration.
We denote by $r_t^{(k)}$ the difference between the optimum of $f$ and the point queried $x_t^k$,
\[r_t^{(k)} = f(x^\star) - f(x_t^k)~.\]
We aim to minimize the batch cumulative regret,
\[R_T^K = \sum_{t<T} r_t^K~,\]
which is the standard objective with these formulations of the problem \cite{bubeck2012}.
We focus on the case where the cost for a batch of evaluations of $f$ is fixed.
The loss $r_t^K$ incurred at iteration $t$ is then the simple regret for the batch \cite{bubeck2009},
defined as
\[r_t^K = \min_{k<K} r_t^{(k)}~.\]
An upper bound on $R_T^K$ gives an upper bound of $\frac{R_T^K}{T}$ on the minimum gap between the best point
found so far and the true maximum.
We also provide bounds on the full cumulative regret,
\[R_{TK} = \sum_{t<T} \sum_{k<K} r_t^{(k)}~,\]
which model the case where all the queries in a batch should have a low regret.

\subsection{Gaussian Processes}
In order to analyze the efficiency of a strategy, we have to make some assumptions on $f$.
We want extreme variations of the function to have low probability.

Modeling $f$ as a sample of a Gaussian Process (GP) is a natural way
to formalize the intuition that nearby location are highly correlated.
It can be seen as a continuous extension of multidimensional Gaussian distributions.
We say that a random process $f$ is Gaussian
with mean function $m$ and non-negative definite covariance function (kernel) $k$ written :
\begin{align*}
f &\sim GP(m, k)~,\\
\text{where } m &: \X \to \R\\
\text{and } k &: \X \times \X \to \R^+~,
\end{align*}
when for any finite subset of locations
the values of the random function form a multivariate Gaussian random variable
of mean vector $\bm{\mu}$ and covariance matrix $\mat{C}$
given by the mean $m$ and the kernel $k$ of the GP.
That is, for all finite $n$ and $x_1, \dots, x_n \in \X$,
\begin{align*}
  (f(x_1),\dots, f(x_n)) &\sim \N(\bm{\mu}, \mat{C})~,\\
  \text{with } \bm{\mu}[x_i] &= m(x_i)\\
  \text{and } \mat{C}[x_i,x_j] &= k(x_i, x_j)~.
\end{align*}
If we have the prior knowledge that $f$ is drawn from a GP with zero mean
\footnote{this is without loss of generality as the kernel $k$ can completely define the GP \cite{rasmussen2005}.}
and known kernel,
we can use Bayesian inference conditioned on the observations after $T$ iterations
to get the closed formulae for computing the posterior \cite{rasmussen2005},
which is a GP of mean and variance given at each location $x \in \X$ by :
\begin{align}
  \label{eq:mu}
  \wh{\mu}_{T+1}(x) &= \mat{k}_T(x)^\top \mat{C}_T^{-1}\mat{Y}_T\\
  \label{eq:sigma}
  \text{and } \wh{\sigma}^2_{T+1}(x) &= k(x,x) - \mat{k}_T(x)^\top \mat{C}_T^{-1} \mat{k}_T(x)~,
\end{align}
$\mat{X}_T=\{x_t^k\}_{t<T,k<K}$ is the set of queried locations,
$\mat{Y}_T=[y_t^k]_{x_t^k \in \mat{X}_T}$ is the vector of noisy observations,
$\mat{k}_T(x) = [k(x_t^k, x)]_{x_t^k \in \mat{X}_T}$ is the vector of covariances
between $x$ and the queried points,
and $\mat{C}_T = \mat{K}_T + \sigma^2 \mat{I}$
with $\mat{K}_T=[k(x,x')]_{x,x' \in \mat{X}_T}$ the kernel matrix
and $\mat{I}$ stands for the identity matrix.

\begin{figure}[t]
  \begingroup
  \tikzset{every picture/.style={scale=.75}}
  \begin{center}
%
%
%
%

\definecolor{mycolor1}{rgb}{0.875,0.875,0.875}
\definecolor{mycolor2}{rgb}{0,0.498039215803146,0}

\begin{tikzpicture}

\begin{axis}[%
scale only axis,
width=4in,
height=2in,
yticklabel style={align=right,inner sep=0pt,xshift=-0.1cm},
xtick={-1,-0.5,0,.5,1},
xmin=-1.2, xmax=1.3,
ymin=-2.5, ymax=0.5]

\addplot [fill=mycolor1,draw=black,forget plot] coordinates{ (-1.2,-0.421410583285268)(-1.175,-0.489420505219225)(-1.15,-0.557224102465456)(-1.125,-0.624338522885627)(-1.1,-0.690101981541974)(-1.075,-0.75357799300863)(-1.05,-0.813397345842684)(-1.025,-0.867505859255869)(-1,-0.912813407871538)(-0.975,-0.944911731713062)(-0.95,-0.958497341145127)(-0.925,-0.949684608833108)(-0.9,-0.920903705473503)(-0.875,-0.878287930611012)(-0.85,-0.827394628307923)(-0.825,-0.772210829467251)(-0.8,-0.715360821662011)(-0.775,-0.658551205834719)(-0.75,-0.602915363426654)(-0.725,-0.549230769259095)(-0.7,-0.498048609130932)(-0.675,-0.449770737538613)(-0.65,-0.40469603357818)(-0.625,-0.363048854006163)(-0.6,-0.324996741980193)(-0.575,-0.290661433508235)(-0.55,-0.260125451692161)(-0.525,-0.233435565787386)(-0.5,-0.210603769729149)(-0.475,-0.191606006335368)(-0.45,-0.176378507233417)(-0.425,-0.164811242772483)(-0.4,-0.156737487673444)(-0.375,-0.151917785583111)(-0.35,-0.150015462524581)(-0.325,-0.150559050441666)(-0.3,-0.152884300227538)(-0.275,-0.156045120648796)(-0.25,-0.158681300383645)(-0.225,-0.158841782986855)(-0.2,-0.153816299613864)(-0.175,-0.140175590455579)(-0.15,-0.114408405881147)(-0.125,-0.0753867361184067)(-0.0999999999999999,-0.0267098982853913)(-0.075,0.0271145690098277)(-0.05,0.0822901551241473)(-0.0249999999999999,0.13600790765228)(0,0.186287477869973)(0.0250000000000001,0.231741056773152)(0.05,0.271385974326665)(0.075,0.304520344496309)(0.1,0.330647698796977)(0.125,0.349436218733408)(0.15,0.360703704218678)(0.175,0.364425082760307)(0.2,0.360764607105034)(0.225,0.350140626919796)(0.25,0.333337119495961)(0.275,0.31167951434838)(0.3,0.287276096913147)(0.325,0.263241467000039)(0.35,0.243597535616658)(0.375,0.23221699235153)(0.4,0.229406928798976)(0.425,0.232227152174706)(0.45,0.237404850733996)(0.475,0.242277010978374)(0.5,0.244961671758685)(0.525,0.244214452050528)(0.55,0.239240412914564)(0.575,0.229546194442985)(0.6,0.214839314121564)(0.625,0.194962737442596)(0.65,0.16985278532584)(0.675,0.139511915648822)(0.7,0.103990956729983)(0.725,0.0633774321315392)(0.75,0.0177879096775577)(0.775,-0.0326368971275148)(0.8,-0.0877350541335101)(0.825,-0.147324821416566)(0.85,-0.21120480597817)(0.875,-0.279152935014824)(0.9,-0.350924180914071)(0.925,-0.426246797525892)(0.95,-0.50481659440329)(0.975,-0.586288418864144)(1,-0.670263427273132)(1.025,-0.756269711441706)(1.05,-0.843732040517426)(1.075,-0.93192321388079)(1.1,-1.01988363597341)(1.125,-1.10628562565505)(1.15,-1.18920462610989)(1.175,-1.26575372952912)(1.2,-1.331609199136)(1.225,-1.3808014955446)(1.25,-1.40684813927193)(1.275,-1.40749085376848)(1.3,-1.38717494400226)(1.3,-2.05853188635693)(1.275,-2.01011004758792)(1.25,-1.97341166432438)(1.225,-1.95221745546777)(1.2,-1.9445615834925)(1.175,-1.94479828321212)(1.15,-1.94775790748682)(1.125,-1.94987542633094)(1.1,-1.94898455766073)(1.075,-1.94384627624458)(1.05,-1.93378621651565)(1.025,-1.91846866019681)(1,-1.89776353108548)(0.975,-1.87166857089672)(0.95,-1.84026321641754)(0.925,-1.80368079634864)(0.9,-1.76209155062029)(0.875,-1.71569223965717)(0.85,-1.66469991644623)(0.825,-1.60934844963365)(0.8,-1.54988697421858)(0.775,-1.48657980315863)(0.75,-1.41970756598003)(0.725,-1.34956951203197)(0.7,-1.27648706756824)(0.675,-1.20080890105872)(0.65,-1.12291796651691)(0.625,-1.04324131010502)(0.6,-0.962263918261811)(0.575,-0.880548680926844)(0.55,-0.798765837018946)(0.525,-0.717737330922496)(0.5,-0.638504549366322)(0.475,-0.562431380930542)(0.45,-0.491354494262206)(0.425,-0.427773955627773)(0.4,-0.374999252217767)(0.375,-0.336980668085546)(0.35,-0.31735257963463)(0.325,-0.31591590792323)(0.3,-0.328155740865745)(0.275,-0.349392111280847)(0.25,-0.375884578540566)(0.225,-0.40492889561399)(0.2,-0.434631298406748)(0.175,-0.463664927427997)(0.15,-0.491091899971471)(0.125,-0.516249741627541)(0.1,-0.538684636438911)(0.075,-0.558117305045726)(0.05,-0.574433619709755)(0.0250000000000001,-0.587697803114202)(0,-0.598191400843515)(-0.0249999999999999,-0.606486885245545)(-0.05,-0.613570275167062)(-0.075,-0.621026823764192)(-0.0999999999999999,-0.631275386802559)(-0.125,-0.647725693863161)(-0.15,-0.67450014208674)(-0.175,-0.714847163559256)(-0.2,-0.76749925587829)(-0.225,-0.828884956318991)(-0.25,-0.895512162556806)(-0.275,-0.964604497976478)(-0.3,-1.03414105141884)(-0.325,-1.10268751420089)(-0.35,-1.16921894374707)(-0.375,-1.23298695700702)(-0.4,-1.29343012176705)(-0.425,-1.35011540131603)(-0.45,-1.4026999870616)(-0.475,-1.45090619660949)(-0.5,-1.49450479655915)(-0.525,-1.53330389799095)(-0.55,-1.56714170571723)(-0.575,-1.59588212487207)(-0.6,-1.61941271683511)(-0.625,-1.63764487217634)(-0.65,-1.65051642437003)(-0.675,-1.65799735635985)(-0.7,-1.66009987431418)(-0.725,-1.65689513585408)(-0.75,-1.64854067185128)(-0.775,-1.63532565725012)(-0.8,-1.61774672667973)(-0.825,-1.59663638955399)(-0.85,-1.5733790765429)(-0.875,-1.55025427268798)(-0.9,-1.53088016861361)(-0.925,-1.52041974933496)(-0.95,-1.52458797941153)(-0.975,-1.5456730613971)(-1,-1.58017606241738)(-1.025,-1.62321183099173)(-1.05,-1.67076413868797)(-1.075,-1.7201100634009)(-1.1,-1.76953919962453)(-1.125,-1.81800400939991)(-1.15,-1.86486883484992)(-1.175,-1.90975306176443)(-1.2,-1.95243647767886)};
\addplot [
color=blue,
only marks,
mark=+,
mark options={solid},
forget plot
]
coordinates{
 (-0.9432,-1.2543)(-0.153621012262113,-0.4043)(0.360257825320531,-0.0198877954270356)(1.24252251138805,-1.70980966938825) 
};
\addplot [
color=blue,
solid,
forget plot
]
coordinates{
 (-1.2,-1.18692353048207)(-1.175,-1.19958678349183)(-1.15,-1.21104646865769)(-1.125,-1.22117126614277)(-1.1,-1.22982059058325)(-1.075,-1.23684402820477)(-1.05,-1.24208074226533)(-1.025,-1.2453588451238)(-1,-1.24649473514446)(-0.975,-1.24529239655508)(-0.95,-1.24154266027833)(-0.925,-1.23505217908403)(-0.9,-1.22589193704355)(-0.875,-1.2142711016495)(-0.85,-1.20038685242541)(-0.825,-1.18442360951062)(-0.8,-1.16655377417087)(-0.775,-1.14693843154242)(-0.75,-1.12572801763897)(-0.725,-1.10306295255659)(-0.7,-1.07907424172256)(-0.675,-1.05388404694923)(-0.65,-1.02760622897411)(-0.625,-1.00034686309125)(-0.6,-0.972204729407652)(-0.575,-0.943271779190152)(-0.55,-0.913633578704697)(-0.525,-0.883369731889166)(-0.5,-0.85255428314415)(-0.475,-0.821256101472428)(-0.45,-0.78953924714751)(-0.425,-0.757463322044256)(-0.4,-0.725083804720247)(-0.375,-0.692452371295066)(-0.35,-0.659617203135823)(-0.325,-0.626623282321277)(-0.3,-0.59351267582319)(-0.275,-0.560324809312637)(-0.25,-0.527096731470226)(-0.225,-0.493863369652923)(-0.2,-0.460657777746077)(-0.175,-0.427511377007418)(-0.15,-0.394454273983943)(-0.125,-0.361556214990784)(-0.0999999999999999,-0.328992642543975)(-0.075,-0.296956127377182)(-0.05,-0.265640060021457)(-0.0249999999999999,-0.235239488796632)(0,-0.205951961486771)(0.0250000000000001,-0.177978373170525)(0.05,-0.151523822691545)(0.075,-0.126798480274709)(0.1,-0.104018468820967)(0.125,-0.0834067614470666)(0.15,-0.0651940978763965)(0.175,-0.0496199223338452)(0.2,-0.0369333456508569)(0.225,-0.0273941343470971)(0.25,-0.0212737295223022)(0.275,-0.0188562984662335)(0.3,-0.0204398219762993)(0.325,-0.0263372204615955)(0.35,-0.0368775220089859)(0.375,-0.0523818378670076)(0.4,-0.0727961617093953)(0.425,-0.0977734017265335)(0.45,-0.126974821764105)(0.475,-0.160077184976084)(0.5,-0.196771438803819)(0.525,-0.236761439435984)(0.55,-0.279762712052191)(0.575,-0.32550124324193)(0.6,-0.373712302070124)(0.625,-0.424139286331211)(0.65,-0.476532590595536)(0.675,-0.53064849270495)(0.7,-0.586248055419126)(0.725,-0.643096039950214)(0.75,-0.700959828151234)(0.775,-0.759608350143071)(0.8,-0.818811014176047)(0.825,-0.878336635525107)(0.85,-0.937952361212197)(0.875,-0.997422587335996)(0.9,-1.05650786576718)(0.925,-1.11496379693726)(0.95,-1.17253990541042)(0.975,-1.22897849488043)(1,-1.28401347917931)(1.025,-1.33736918581926)(1.05,-1.38875912851654)(1.075,-1.43788474506268)(1.1,-1.48443409681707)(1.125,-1.528080525993)(1.15,-1.56848126679836)(1.175,-1.60527600637062)(1.2,-1.63808539131425)(1.225,-1.66650947550619)(1.25,-1.69012990179816)(1.275,-1.7088004506782)(1.3,-1.72285341517959) 
};

\end{axis}
\end{tikzpicture}%
    \caption{Gaussian Process inference of the posterior mean $\wh{\mu}$ (blue line) and deviation $\wh{\sigma}$
      based on four realizations (blue crosses).
      The high confidence region (area in grey) is delimited by $\wh{f}^+$ and $\wh{f}^-$.}
    \label{fig:gp}
  \end{center}
  \endgroup
\end{figure}
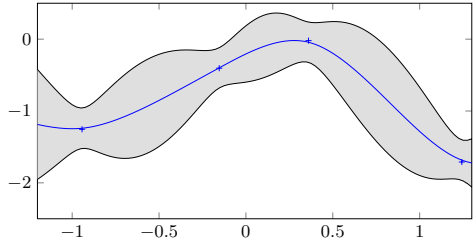 

The three most common kernel functions are:
\begin{itemize}
\item the polynomial kernels of degree $\alpha \in \mathbb{N}$,
$k(x_1,x_2) = (x_1^\top x_2 + c)^\alpha~, ~ c \in \R$,
\item the (Gaussian) Radial Basis Function kernel (RBF or Squared Exponential) with length-scale $l > 0$,
  $k(x_1,x_2) = \exp\Big(-\frac{\norm{x_1,x_2}^2}{2 l^2}\Big)$,
\item the Mat\'{e}rn kernel, of length-scale $l$ and parameter $\nu$,
  \begin{align}
    \label{eq:matern} 
    k(x_1,x_2) = \frac{2^{1-\nu}}{\Gamma(\nu)} \left(\frac{\sqrt{2\nu}\norm{x_1,x_2}}{l}\right)^\nu K_\nu\Big(\frac{\sqrt{2\nu} \norm{x_1,x_2}}{l}\Big)~,
  \end{align}
  where $K_\nu$ is the modified Bessel function of the second kind and order $\nu$.
 \end{itemize}
The Bayesian inference is represented on Figure \ref{fig:gp} in a sample problem in dimension $1$.
The posteriors are based on four observations of a Gaussian Process.
The vertical height of the grey area is proportional to the posterior deviation at each point.

\section{Parallel Optimization Procedure}
\label{sec:algo}
\subsection{Confidence Region}
A key property from the GP framework is that the posterior distribution at a location $x$
has a normal distribution $\N(\wh{\mu}_T(x), \wh{\sigma}^2_T(x))$.
We can then define a upper confidence bound $\wh{f}^+$ and a lower confidence bound $\wh{f}^-$,
such that $f$ is included in the interval with high probability,
\begin{align}
  \label{eq:fp}
  \wh{f}_T^+(x) &= \wh{\mu}_T(x) + \sqrt{\beta_T} \wh{\sigma}_T(x)\\
  \label{eq:fm}
  \text{and } \wh{f}_T^-(x) &= \wh{\mu}_T(x) - \sqrt{\beta_T} \wh{\sigma}_T(x)~,
\end{align}
with $\beta_T \in \O(\log T)$ defined in Section \ref{sec:regret}.

$\wh{f}^+$ and $\wh{f}^-$ are illustrated on Figure \ref{fig:gp}
respectively by the upper and lower envelope of the grey area.
The region delimited in that way, the high confidence region,
contains the unknown $f$ with high probability.
This statement will be a main element in the theoretical analysis of the algorithm
in Section \ref{sec:regret}.

\subsection{Relevant Region}
We define the relevant region $\mathfrak{R}_t$ being the region
which contains $x^\star$ with high probability.
Let $y_t^\bullet$ be our lower confidence bound on the maximum,
\begin{align*}
  y_t^\bullet &= \wh{f}_t^-(x_t^\bullet) \text{, where } x_t^\bullet = \argmax_{x \in \X} \wh{f}_t^-(x)~.
\end{align*}
$y_t^\bullet$ is represented by the horizontal dotted green line on Figure \ref{fig:algo}.
$\mathfrak{R}_t$ is defined as :
\[\mathfrak{R}_t = \Big\{ x \in \X \mid \wh{f}_t^+(x) \geq y_t^\bullet \Big\}~.\]
$\mathfrak{R}_t$ discard the locations where $x^\star$ does not belong with high probability.
It is represented in green on Figure \ref{fig:algo}.
We refer to \cite{freitas2012} for related work in the special case of deterministic Gaussian Process Bandits.

In the sequel, we will use a modified version of the relevant region
which also contains $\argmax_{x \in \X} \wh{f}_{t+1}^+(x)$ with high probability.
The novel relevant region is formally defined by :
\begin{align}
\label{eq:r}
\mathfrak{R}_t^+ = \Big\{ x \in \X \mid \wh{\mu}_t(x) + 2\sqrt{\beta_{t+1}} \wh{\sigma}_t(x) \geq y_t^\bullet \Big\}~.
\end{align}
Using $\mathfrak{R}_t^+$ instead of $\mathfrak{R}_t$ guarantees that
the queries at iteration $t$ will leave an impact
on the future choices at iteration $t+1$.

\subsection{\textsf{GP-UCB-PE}}
\begin{algorithm}[t]
   \caption{\textsf{GP-UCB-PE}}
   \label{alg:algo}
   \DontPrintSemicolon
   \For{$t=0,\dots,T$}{
     Compute $\wh{\mu}_t$ and $\wh{\sigma}_t$ with Eq.\ref{eq:mu} and Eq.\ref{eq:sigma}\;
     $x_t^0 \gets \argmax_{x \in \X} \wh{f}_t^+(x)$\;
     Compute $\mathfrak{R}_t^+$ with Eq.\ref{eq:r}\;
     \For{$k=1,\dots,K-1$}{
       Compute $\wh{\sigma}_t^{(k)}$ with Eq.\ref{eq:sigma}\;
       $x_t^k \gets \argmax_{x \in \mathfrak{R}_t^+} \wh{\sigma}_t^{(k)}(x)$\;
     }
     Query $\{x_t^k\}_{k<K}$\;
   }
\end{algorithm}

We present here the Gaussian Process Upper Confidence Bound with Pure Exploration algorithm,
\textsf{GP-UCB-PE}, a novel algorithm combining two strategies to determine the queries $\{x_t^k\}_{k<K}$
for batches of size $K$.
The first location is chosen according to the \textsf{GP-UCB} rule,
\begin{align}
\label{eq:argmax_f}
x_t^0 = \argmax_{x \in \X} \wh{f}_t^+(x) ~.
\end{align}
This single rule is enough to tackle the exploration/exploitation tradeoff.
The value of $\beta_t$ balances between exploring uncertain regions
(high posterior variance $\wh{\sigma}_t^2(x)$)
and focusing on the supposed location of the maximum
(high posterior mean $\wh{\mu}_t(x)$).
This policy is illustrated with the point $x^0$ on Figure \ref{fig:algo}.

The $K-1$ remaining locations are selected via Pure Exploration
restricted to the region $\mathfrak{R}_t^+$.
We aim to maximize $I_t(\mat{X}_t^{K-1})$, the information gain about $f$
by the locations $\mat{X}_t^{K-1}=\{x_t^k\}_{1\leq k<K}$ \cite{cover1991}.
Formally, $I_t(\mat{X})$ is the reduction of entropy when knowing the values of the observations $\mat{Y}$
at $\mat{X}$, conditioned on $\mat{X}_t$ the observations we have seen so far,
\begin{equation}
  \label{eq:info_gain}
  I_t(\mat{X}) = H(\mat{Y}) - H(\mat{Y} \mid \mat{X}_t)~.
\end{equation}
Finding the $K-1$ points that maximize $I_t$ for any integer $K$ is known to be \NP-complete \cite{ko1995}.
However, due to the submodularity of $I_t$ \cite{guestrin2005},
it can be efficiently approximated by the greedy procedure which selects the points one by one
and never backtracks.
For a Gaussian distribution,
$H(\N(\mat{\mu}, \mat{C})) = \frac{1}{2} \log \det ( 2 \pi e \mat{C})$.
We thus have
$I_t(\mat{X}) \in \O(\log \det \mat{\Sigma})$,
where $\mat{\Sigma}$ is the covariance matrix of $\mat{X}$.
For GP, the location of the single point that maximizes the information gain
is easily computed by maximizing the posterior variance.
For all $~1\leq k < K$ our greedy strategy selects the following points one by one,
\begin{align}
\label{eq:argmax_s}
x_t^k = \argmax_{x \in \mathfrak{R}_t^+} \wh{\sigma}_t^{(k)}(x)  ~,
\end{align}
where $\wh{\sigma}_t^{(k)}$ is the updated variance after choosing $\{x_t^{k'}\}_{k'<k}$.
We use here the fact that the posterior variance does not depend on the values $y_t^k$ of the observations,
but only on their position $x_t^k$.
One such point is illustrated with $x^1$ on Figure \ref{fig:algo}.
These $K-1$ locations reduce the uncertainty about $f$,
improving the guesses of the \textsf{UCB} procedure by $x_t^0$.
The overall procedure is shown in Algorithm \ref{alg:algo}.

\begin{figure}[t]
  \begin{center}
    \includegraphics{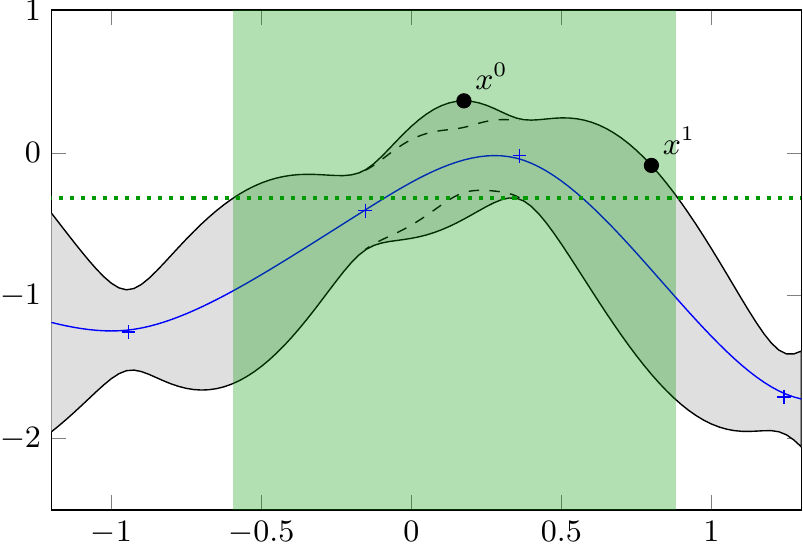}
    \caption{
      Two queries of \textsf{GP-UCB-PE} on the previous example.
      The lower confidence bound on the maximum is represented by the horizontal dotted green line at $y_t^\bullet$.
      The relevant region $\mathfrak{R}$ is shown in light green (without edges).
      The first query $x^0$ is the maximizer of $\wh{f}^+$.
      We show in dashed line the upper and lower bounds with the update of $\wh{\sigma}$
      after having selected $x^0$.
      The second query $x^1$ is the one maximizing the uncertainty inside $\mathfrak{R}^+$,
      an extension of $\mathfrak{R}$ which is not illustrated here.}
    \label{fig:algo}
  \end{center}
\end{figure}

\subsection{Numerical Complexity}
Even if the numerical cost of \textsf{GP-UCB-PE} is insignificant in practice
compared to the cost of the evaluation of $f$,
the complexity of the exact computations of the variances (Eq.\ref{eq:sigma}) is in $\O(n^3)$
and might by prohibitive for large $n=tK$.
One can reduce drastically the computation time
by means of Lazy Variance Calculation \cite{desautels2012},
built on the fact that $\wh{\sigma}_t(x)$ always decreases when $t$ increases for all $x\in\X$.
We further mention that efficient approximated inference algorithms
such as the EP approximation and MCMC sampling \cite{kuss2005}
can be used in order to face the challenge of large $n$.

\section{Regret Bounds}
\label{sec:regret}
\subsection{Main Result}
The main theoretical result of this article is the upper bound on the regret formulated in Theorem \ref{thm:regret}.
We need to adjust the parameter $\beta_t$ such that $f(x)$
is contained by the high confidence region for all iterations $t$
with probability at least $1-\delta$ for a fixed $0<\delta<1$.
\begin{itemize}
\item If $\X$ is finite, then we choose $\beta_t = 2\log(\abs{\X} \frac{\pi_t}{\delta})$
  where $\pi_t>0$ such that $\sum_{t=0}^\infty \pi_t^{-1}=1$.
  We set for example $\beta_t = 2\log\big(\abs{\X} t^2 \frac{\pi^2}{6 \delta}\big)$.

\item If $\X \subset [0,r]^d$ is compact and convex,
  we need the following bounds on the derivatives of $f$,
  \begin{align*}
    \exists a,b>0,\ \forall j\leq d,\ \ 
    \Pr\left(\sup_{x \in \X} \abs{\frac{\partial f}{\partial x_j}} > L\right) \leq a e^{-\frac{L^2}{b^2}}~.
  \end{align*}
  Then, we can set the parameter $\beta_t$ to :
  \[\beta_t = 2 \log\left(t^2 \frac{2 \pi^2}{3 \delta}\right) + 2d\log\left(t^2 d b r \sqrt{\log\Big(\frac{4da}{\delta}\Big)}\right)~.\]
\end{itemize}

The regret bound are expressed in term of $\gamma_{TK}$,
the maximum information gain (Eq. \ref{eq:info_gain}) obtainable by a sequence of $TK$ queries,
\[\gamma_t = \max_{\mat{X} \subset \X, \abs{\mat{X}}=t} I_0(\mat{X})~.\]
Under these assumptions, we obtain the following result.

\begin{theorem}
\label{thm:regret}
Fix $0<\delta<1$ and consider the calibration of $\beta_t$ defined as above,
assuming $f \sim GP(0, k)$ with bounded variance, $\forall x\in\X,~k(x,x)\leq 1$,
then the batch cumulative regret incurred by \textsf{GP-UCB-PE} on $f$
is bounded by $\O\Big(\sqrt{\frac{T}{K}\beta_T \gamma_{TK}}\Big)$ \textit{whp},
More precisely, with $C_1 = \frac{4}{\log(1+\sigma^{-2})}$, and $C_2=\frac{\pi}{\sqrt{6}}$, $\forall T,$
\[\Pr\left(R_T^K \leq \sqrt{C_1 \frac{T}{K} \beta_T \gamma_{TK} + C_2}\right) \geq 1 - \delta~.\]
For the full cumulative regret $R_{TK}$ we obtain similar bounds with $C_1=\frac{36}{\log(1+\sigma^{-2})}$
\[\Pr\left(R_{TK} \leq \sqrt{C_1 T K \beta_T \gamma_{TK} + C_2}\right) \geq 1 - \delta~.\]
\end{theorem}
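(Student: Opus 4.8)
\medskip

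The plan is to follow the standard GP-UCB regret decomposition of Srinivas et al., adapted to handle the batch structure and the Pure Exploration queries. First I would establish the high-probability confidence event: with the stated calibration of $\beta_t$, one shows by a union bound (over $\X$ in the finite case, or over a suitably fine discretization plus the Lipschitz tail bound on the partial derivatives in the compact-convex case) that $\wh{f}_t^-(x) \leq f(x) \leq \wh{f}_t^+(x)$ for all $x\in\X$ and all $t\geq 0$ simultaneously, with probability at least $1-\delta$. Crucially, because the posterior variance $\wh{\sigma}_t$ depends only on the \emph{locations} queried and not on the observed values, one may condition on the past batch locations and apply the Gaussian tail bound to $f(x)$ given the noisy observations; the factor $\pi_t$ with $\sum_t \pi_t^{-1}=1$ absorbs the union over $t$. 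Work on this event for the rest of the argument.

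\medskip

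Next I would bound the per-iteration batch regret $r_t^K = \min_{k<K} r_t^{(k)}$ by the regret of the UCB query $x_t^0$. On the confidence event, $f(x^\star)\leq \wh{f}_t^+(x^\star)\leq \wh{f}_t^+(x_t^0)$ since $x_t^0$ maximizes $\wh{f}_t^+$, and $f(x_t^0)\geq \wh{f}_t^-(x_t^0)$, so
\[
r_t^K \leq r_t^{(0)} = f(x^\star)-f(x_t^0) \leq \wh{f}_t^+(x_t^0)-\wh{f}_t^-(x_t^0) = 2\sqrt{\beta_t}\,\wh{\sigma}_t(x_t^0)~.
\]
Here is where the Pure Exploration queries earn their keep: I want to replace $\wh{\sigma}_t(x_t^0)$ by the \emph{post-batch} posterior deviation, call it $\wh{\sigma}_t^{(K)}(x_t^0)$, computed after also conditioning on $x_t^1,\dots,x_t^{K-1}$. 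The key geometric claim is that $x_t^0\in\mathfrak{R}_t^+$ — which holds on the confidence event since $\wh{\mu}_t(x_t^0)+2\sqrt{\beta_{t+1}}\wh{\sigma}_t(x_t^0)\geq\wh{f}_t^+(x_t^0)\geq f(x^\star)\geq y_t^\bullet$ — so the greedy variance-maximization in $\mathfrak{R}_t^+$ guarantees that for each $k\geq 1$, $\wh{\sigma}_t^{(k)}(x_t^k)\geq \wh{\sigma}_t^{(k)}(x_t^0)\geq \wh{\sigma}_t^{(k+1)}(x_t^0)$. Chaining these inequalities over $k=1,\dots,K-1$ yields $\wh{\sigma}_t(x_t^0)=\wh{\sigma}_t^{(1)}(x_t^0)\leq\ldots$, and more usefully, the sum over the whole batch of the squared deviations at the queried points dominates $K$ times something comparable to $\wh{\sigma}_t^{(K)}(x_t^0)^2$; combined with the multiplicative relation between reduction in variance and information gain, $\sum_{k<K}\wh{\sigma}_t^{(k)}(x_t^k)^2 \leq C_1' \cdot I_t(\mat{X}_t^{K})$ with $C_1'=\big(\log(1+\sigma^{-2})\big)^{-1}$ (using $k(x,x)\leq 1$ so each factor $\sigma^{-2}\wh{\sigma}^2\leq\sigma^{-2}$), this lets me bound $r_t^K$ in terms of the batch information gain. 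I expect the cleanest route is: $(r_t^K)^2 \leq 4\beta_t\,\wh{\sigma}_t(x_t^0)^2 \leq \frac{4\beta_t}{K}\sum_{k<K}\wh{\sigma}_t^{(k)}(x_t^k)^2$, using that $\wh{\sigma}_t^{(k)}(x_t^k)\geq\wh{\sigma}_t^{(k)}(x_t^0)\geq \wh{\sigma}_t(x_t^0)\cdot(\text{monotone decrease})$ — this is exactly the step that produces the $1/K$ gain and is the main obstacle to get tight.

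\medskip

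Finally I would aggregate over $t<T$. By Cauchy–Schwarz,
\[
R_T^K = \sum_{t<T} r_t^K \leq \sqrt{T\sum_{t<T}(r_t^K)^2} \leq \sqrt{\frac{4\beta_T}{K}\,T\sum_{t<T}\sum_{k<K}\wh{\sigma}_t^{(k)}(x_t^k)^2}~,
\]
using $\beta_t\leq\beta_T$. Then the sum of all post-conditioning variance terms across all $TK$ queries is, by the submodularity/telescoping argument of Srinivas et al., at most $C_1' \cdot I_0(\mat{X}_T)\leq C_1'\gamma_{TK}$, giving $R_T^K\leq\sqrt{C_1\frac{T}{K}\beta_T\gamma_{TK}}$ with $C_1=4/\log(1+\sigma^{-2})$ on the confidence event; the additive $C_2=\pi/\sqrt6$ comes from handling the first iterations / the $\sum\pi_t^{-1}$ bookkeeping so that the bound holds for all $T$. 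For the full cumulative regret $R_{TK}=\sum_{t<T}\sum_{k<K}r_t^{(k)}$, the Pure Exploration queries $x_t^k$ for $k\geq1$ need their own regret bound: since $x_t^k\in\mathfrak{R}_t^+$, one shows $r_t^{(k)}\leq \wh{f}_t^+(x_t^k)-y_t^\bullet \leq \wh{\mu}_t(x_t^k)+\sqrt{\beta_t}\wh{\sigma}_t(x_t^k) - (\wh{\mu}_t(x_t^\bullet)-\sqrt{\beta_t}\wh{\sigma}_t(x_t^\bullet))$, and bounding $\wh{\mu}_t(x_t^k)\leq \wh{\mu}_t(x_t^0)+\sqrt{\beta_{t}}\wh{\sigma}_t(x_t^0)$-type comparisons together with $\wh{\sigma}_t(x_t^k)\leq\wh{\sigma}_t(x_t^1)$ (the first PE query has the largest deviation) gives $r_t^{(k)}\leq 6\sqrt{\beta_t}\max_k\wh{\sigma}_t(x_t^k)$ up to constants, which upon squaring, summing and Cauchy–Schwarz over the $TK$ indices yields the second bound with $C_1=36/\log(1+\sigma^{-2})$. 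The hard part throughout is the bookkeeping that relates the deviation at $x_t^0$ to the deviations actually realized at the $K-1$ exploration points — i.e. justifying rigorously that the PE batch shrinks the UCB uncertainty by the claimed $\sqrt K$ factor — and making sure $x_t^0\in\mathfrak{R}_t^+$ and that $\mathfrak{R}_t^+$ is large enough to also contain next round's argmax so that the variance reductions genuinely compound across iterations into $\gamma_{TK}$.
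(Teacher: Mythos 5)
Your overall skeleton matches the paper's: confidence event from the calibration of $\beta_t$, the bound $r_t^K \leq r_t^{(0)} \leq 2\sqrt{\beta_t}\,\wh{\sigma}_t(x_t^0)$, the information-gain bound $\sum_{t,k}(\wh{\sigma}_t^{(k)}(x_t^k))^2 \leq C_1'\gamma_{TK}$, and a final Cauchy--Schwarz step. You also correctly identify the crux --- relating the UCB deviation to the Pure Exploration deviations to win the factor $1/K$. But the mechanism you propose for that crux does not work, and this is a genuine gap. You argue \emph{within a single batch}: $x_t^0\in\mathfrak{R}_t^+$, hence $\wh{\sigma}_t^{(k)}(x_t^k)\geq\wh{\sigma}_t^{(k)}(x_t^0)$, and then you invoke ``$\wh{\sigma}_t^{(k)}(x_t^0)\geq\wh{\sigma}_t(x_t^0)$ (monotone decrease)''. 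That inequality is reversed: conditioning on the additional points $x_t^0,\dots,x_t^{k-1}$ can only \emph{decrease} the posterior variance, so $\wh{\sigma}_t^{(k)}(x_t^0)\leq\wh{\sigma}_t(x_t^0)$, and the desired conclusion $\wh{\sigma}_t(x_t^0)^2\leq\frac{1}{K}\sum_{k<K}\wh{\sigma}_t^{(k)}(x_t^k)^2$ does not follow. It is in fact false in general: if $x_t^0$ happens to sit at the global variance maximum, every subsequent PE query in that same batch has strictly smaller deviation than $\sigma_t^0$, so no within-batch comparison can dominate $\sigma_t^0$ by the batch average.

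The paper closes this gap with a \emph{cross-iteration} argument, which is exactly why $\mathfrak{R}_t^+$ is defined with $\beta_{t+1}$ (a point you mention at the end but never use in the regret chain). One shows that the \emph{next} UCB query satisfies $x_{t+1}^0\in\mathfrak{R}_t^+$ \textit{whp}, since $\wh{f}_{t+1}^+(x_{t+1}^0)\geq\wh{f}_{t+1}^+(x_t^\bullet)\geq\wh{f}_t^-(x_t^\bullet)=y_t^\bullet$ on the confidence event. Then for every $k\geq1$,
\begin{equation*}
\sigma_{t+1}^0=\wh{\sigma}_{t+1}(x_{t+1}^0)\leq\wh{\sigma}_t^{(k)}(x_{t+1}^0)\leq\wh{\sigma}_t^{(k)}(x_t^k)=\sigma_t^k~,
\end{equation*}
where the first inequality uses ``information never hurts'' (now in the correct direction, because $\wh{\sigma}_{t+1}$ conditions on the whole batch $t$) and the second uses that $x_t^k$ maximizes $\wh{\sigma}_t^{(k)}$ over $\mathfrak{R}_t^+\ni x_{t+1}^0$. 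Summing over $k$ gives $\sigma_t^0+(K-1)\sigma_{t+1}^0\leq\sum_{k<K}\sigma_t^k$, and telescoping over $t$ yields $\sum_{t<T}\sigma_t^0\leq\frac{1}{K}\sum_{t<T}\sum_{k<K}\sigma_t^k$, after which Cauchy--Schwarz and the information-gain bound finish the proof exactly as you describe. So the exploration queries of batch $t$ pay off at iteration $t+1$, not at iteration $t$; without this shift the $\sqrt{K}$ improvement is not established. (Two minor further points: the additive constant $C_2=\pi/\sqrt{6}$ comes from the discretization term $1/t^2$ in the compact-convex version of the bound $r_t^{(0)}\leq2\sqrt{\beta_t}\sigma_t^0+1/t^2$, not from the $\pi_t$ bookkeeping; and your sketch for $R_{TK}$ is in the right spirit of the paper's bound $r_t^{(k)}\leq6\sqrt{\beta_t}\wh{\sigma}_t(x_t^k)$ via membership in $\mathfrak{R}_t^+$, though it would need the same care about which posterior deviation appears.)
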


\subsection{Discussion}
When $K \ll T$, the upper bound for $R_T^K$ is better than the one of sequential \textsf{GP-UCB}
by an order of $\sqrt{K}$,
and equivalent for $R_{TK}$,
when the regrets for all the points in the batch matter.
Compared to \cite{desautels2012}, we remove the need of the initialization phase.
\textsf{GP-UCB-PE} does not need either to multiply the uncertainty parameter $\beta_t$
by $\exp(\gamma_{TK}^\text{init})$ where $\gamma_{TK}^\text{init}$ is equal to
the maximum information gain obtainable by a sequence of $TK$ queries
after the initialization phase.
The improvement can be doubly exponential in the dimension $d$ in the case of RBF Kernels.
To the best of our knowledge,
no regret bounds have been proven for the \textit{Simulation Matching} algorithm.

The values of $\gamma_{TK}$ for different common kernel are reported in Table \ref{tbl:bounds},
where $d$ is the dimension of the space considered
and $\alpha=\frac{d(d+1)}{2\nu +d(d+1)}\leq 1$, $\nu$ being the Mat\'{e}rn parameter.
We also compare on Table \ref{tbl:bounds} the general forms of the bounds for the regret obtained by
\textsf{GP-UCB-PE} and \textsf{GP-BUCB} up to constant terms.
The cumulative regret we obtained with RBF Kernel is of the form
$\tilde{\O}\Big(\sqrt{\frac{T}{K}(\log TK)^d}\Big)$
against $\tilde{\O}\Big(\exp((\frac{2d}{e})^d) \sqrt{\frac{T}{K}(\log TK)^d}\Big)$
for \textsf{GP-BUCB}.

\subsection{Proofs of the Main Result}
In this section, we analyze theoretically the regret bounds for the \textsf{GP-UCB-PE} algorithm.
We provide here the main steps for the proof of 
Theorem \ref{thm:regret}.
On one side the \textsf{UCB} rule of the algorithm provides a regret
bounded by the information we have on $f$ conditioned on the values observed so far.
On the other side, the Pure Exploration part gathers information
and therefore accelerates the decrease in uncertainty.
We refer to \cite{desautels2012} for the proofs of the bounds for \textsf{GP-BUCB}.

For the sake of concision, we introduce the notations
$\sigma_t^k$ for $\wh{\sigma}_t^{(k)}(x_t^k)$
and $\sigma_t^0$ for $\wh{\sigma}_t(x_t^0)$.
We simply bound $r_t^K$ the regret for the batch at iteration $t$ by the simple regret $r_t^{(0)}$
for the single query chosen via the \textsf{UCB} rule.
We then give a bound for $r_t^{(0)}$ which is proportional to the posterior deviations $\sigma_t^0$.
Knowing that the sum of all $(\sigma_t^k)^2$ is not greater than $C_1 \gamma_{TK}$,
we want to prove that the sum of the $(\sigma_t^0)^2$ is less than this bound divided by $K$.
The arguments are based on the fact that the posterior for $f(x)$ is Gaussian,
allowing us to choose $\beta_t$ such that :
\[\forall x \in \X, \forall t<T,\ f(x) \in [\wh{f}_t^-(x), \wh{f}_t^+(x)]\]
holds with high probability.
Here and in the following, ``with high probability'' or \textit{whp}
means ``with probability at least $1-\delta$'' for any $0<\delta<1$,
the definition of $\beta_t$ being dependent of $\delta$.

\begin{table}[t]
\caption{General Forms of Regret Bounds for \textsf{GP-UCB-PE} and \textsf{GP-BUCB}}
\label{tbl:bounds}
\begin{center}
\begin{small}
\begin{tabular}{l*{6}{|c}}
& \multicolumn{3}{c}{$\textsf{GP-UCB-PE}$} & \multicolumn{3}{|c}{$\textsf{GP-BUCB}$}\\
\hline\abovespace
$R_T^K$ & \multicolumn{3}{c}{$\sqrt{\frac{T\log T}{K} \gamma_{TK}}$}
        & \multicolumn{3}{|c}{$C \sqrt{\frac{T \log TK}{K} \gamma_{TK}}$}\\[1ex]
\hline\hline\abovespace
Kernel & \multicolumn{2}{c}{Linear} & \multicolumn{2}{c}{RBF} & \multicolumn{2}{c}{Mat\'{e}rn} \\
\hline\abovespace
$\gamma_{TK}$ & \multicolumn{2}{c}{$d \log{TK}$} & \multicolumn{2}{c}{$(\log TK)^{d+1}$}
              & \multicolumn{2}{c}{$(TK)^\alpha \log TK$} \\[.5ex]
$C$ & \multicolumn{2}{c}{$\exp(\frac{2}{e})$}
                  & \multicolumn{2}{c}{$\exp((\frac{2d}{e})^d)$}
                  & \multicolumn{2}{c}{$e$}\\[.5ex]
\end{tabular}
\end{small}
\end{center}
\end{table}

\begin{lemma}
\label{lem:reg}
For finite $\X$, we have $r_t^K \leq r_t^{(0)} \leq 2 \sqrt{\beta_t} \sigma_t^0$,
and for compact and convex $\X$ following the assumptions of Theorem \ref{thm:regret},
$r_t^K \leq r_t^{(0)} \leq 2 \sqrt{\beta_t} \sigma_t^0 + \frac{1}{t^2}$,
holds with probability at least $1-\delta$.
\end{lemma}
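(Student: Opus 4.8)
The plan is to follow the by-now classical \textsf{GP-UCB} argument (in the spirit of \cite{srinivas2012}): reduce everything to a single high-probability confidence event, and then chain a handful of elementary inequalities. The trivial half is immediate: since $0\in\{0,\dots,K-1\}$ we have $r_t^K=\min_{k<K}r_t^{(k)}\le r_t^{(0)}$, so it suffices to control $r_t^{(0)}=f(x^\star)-f(x_t^0)$, where $x_t^0$ is the \textsf{UCB} query of Eq.~\ref{eq:argmax_f}.

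Next I would set up the confidence region. For fixed $x\in\X$ and fixed $t$, conditionally on the data gathered so far the posterior value $f(x)$ is $\N(\wh\mu_t(x),\wh\sigma_t^2(x))$, hence $(f(x)-\wh\mu_t(x))/\wh\sigma_t(x)$ is standard normal and the Gaussian tail bound gives $\Pr\big(|f(x)-\wh\mu_t(x)|>\sqrt{\beta_t}\,\wh\sigma_t(x)\big)\le e^{-\beta_t/2}$. For finite $\X$, a union bound over $x\in\X$ and over $t\ge 1$ with $\beta_t=2\log(|\X|\pi_t/\delta)$ gives $\sum_{t\ge1}|\X|e^{-\beta_t/2}=\delta\sum_{t\ge1}\pi_t^{-1}=\delta$, so with probability at least $1-\delta$ the event
\[\mathcal{E}:\qquad \forall\,t<T,\ \forall\,x\in\X,\quad f(x)\in[\wh f_t^-(x),\wh f_t^+(x)]\]
holds. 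On $\mathcal{E}$, using $x_t^0=\argmax_{x\in\X}\wh f_t^+(x)$ we get $f(x^\star)\le\wh f_t^+(x^\star)\le\wh f_t^+(x_t^0)$ and $f(x_t^0)\ge\wh f_t^-(x_t^0)$; subtracting yields $r_t^{(0)}\le\wh f_t^+(x_t^0)-\wh f_t^-(x_t^0)=2\sqrt{\beta_t}\,\wh\sigma_t(x_t^0)=2\sqrt{\beta_t}\,\sigma_t^0$, which is the finite-$\X$ bound.

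For compact convex $\X\subset[0,r]^d$, I would replace the union bound over $\X$ by one over a sequence of finite grids: at iteration $t$ take $\X_t\subset\X$ a uniform grid of $\tau_t^d$ points, so every $x\in\X$ has a nearest grid point $[x]_t$ with $\norm{x,[x]_t}_1\le dr/\tau_t$. The step above then yields the confidence bound on $\bigcup_t\X_t$ whp with the stated $\beta_t$, whose $2d\log(\cdot)$ term supplies the extra $\log|\X_t|$. Separately, the assumed sub-Gaussian tail on $\partial f/\partial x_j$ gives, whp, a uniform Lipschitz constant $L$ for $f$, and choosing $\tau_t$ of order $t^2 dbr\sqrt{\log(4da/\delta)}$ makes $|f(x)-f([x]_t)|\le L\,dr/\tau_t\le t^{-2}$ for all $x$ and all $t$ at once. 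Splitting the failure budget (say $\delta/2$ for each event), on their intersection
\[f(x^\star)\le f([x^\star]_t)+t^{-2}\le \wh f_t^+([x^\star]_t)+t^{-2}\le \wh f_t^+(x_t^0)+t^{-2},\]
since $[x^\star]_t\in\X$ and $x_t^0$ maximizes $\wh f_t^+$ over all of $\X$; together with $f(x_t^0)\ge\wh f_t^-(x_t^0)$ (the lower side handled, exactly as for $x^\star$, by applying the grid estimate at $[x_t^0]_t$) this gives $r_t^{(0)}\le 2\sqrt{\beta_t}\,\sigma_t^0+t^{-2}$.

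The main obstacle is precisely this continuous case: one must pick $\tau_t$ so that the discretization error is $O(t^{-2})$ uniformly in $t$ while keeping $\log|\X_t|=O(d\log t)$ so that $\beta_t$ stays $O(\log t)$, and one must arrange that the confidence event and the Lipschitz event hold simultaneously over all iterations. This is the bookkeeping already carried out in \cite{srinivas2012}, and the calibration of $\beta_t$ stated before Theorem~\ref{thm:regret} is chosen exactly so that it goes through; the finite case is essentially immediate once the union bound is in place.
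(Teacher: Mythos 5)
Your argument is essentially the one the paper delegates to \cite{srinivas2012} (Lemmas 5.1--5.2 for finite $\X$, 5.5--5.8 for the compact case): the trivial reduction $r_t^K\le r_t^{(0)}$, a Gaussian tail bound plus union bound to get the confidence event, and for continuous $\X$ a per-iteration discretization combined with a high-probability Lipschitz bound to pay an extra $t^{-2}$. The finite-$\X$ half and the upper-bound chain $f(x^\star)\le \wh f_t^+([x^\star]_t)+t^{-2}\le \wh f_t^+(x_t^0)+t^{-2}$ are correct as written.

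There is one misstep in the compact case: you justify $f(x_t^0)\ge\wh f_t^-(x_t^0)$ ``by applying the grid estimate at $[x_t^0]_t$.'' That route does not deliver this inequality: it only gives $f(x_t^0)\ge \wh f_t^-([x_t^0]_t)-t^{-2}$, and $\wh f_t^-$ at the grid point cannot be compared to $\wh f_t^+(x_t^0)-2\sqrt{\beta_t}\,\sigma_t^0$, since the posterior mean and variance at $[x_t^0]_t$ differ from those at $x_t^0$; at best you would end up with $\wh\sigma_t([x_t^0]_t)$ in place of $\sigma_t^0$ and a second $t^{-2}$. The standard fix (and the reason the calibrated $\beta_t$ has two additive terms) is that the selected points need no covering at all: there is exactly one $x_t^0$ per iteration, so a direct union bound over $t$ with failure budget proportional to $t^{-2}$ gives $|f(x_t^0)-\wh\mu_t(x_t^0)|\le\sqrt{\beta_t}\,\wh\sigma_t(x_t^0)$ \textit{whp} simultaneously for all $t$; the discretization is needed only on the $x^\star$ side. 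With that substitution your proof matches the cited argument exactly.
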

We refer to \cite{srinivas2012} (Lemmas 5.2, 5.8) for the detailed proof of the bound for $r_t^{(0)}$.

Now we show an intermediate result bounding the deviations at the points $x_{t+1}^0$
by the one at the points $x_t^{K-1}$.

\begin{lemma}
\label{lem:s_0_s_k}
The deviation of the point selected by the \textsf{UCB} policy
is bounded by the one for the last point selected by the \textsf{PE} policy
at the previous iteration, \textit{whp},
$\forall t<T,\ \sigma_{t+1}^0 \leq \sigma_{t}^{K-1}$
\end{lemma}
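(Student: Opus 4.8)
The plan is to reduce everything to one geometric fact: the \textsf{UCB} query $x_{t+1}^0$ of iteration $t+1$ always falls inside the relevant region $\mathfrak{R}_t^+$ constructed at iteration $t$. Granting this, the conclusion is immediate, since $x_t^{K-1}=\argmax_{x\in\mathfrak{R}_t^+}\wh{\sigma}_t^{(K-1)}(x)$ dominates $\wh{\sigma}_t^{(K-1)}$ on all of $\mathfrak{R}_t^+$, in particular at $x_{t+1}^0$, and adding the observation at $x_t^{K-1}$ to those at $x_t^0,\dots,x_t^{K-2}$ only shrinks the posterior deviation everywhere; hence $\sigma_{t+1}^0=\wh{\sigma}_{t+1}(x_{t+1}^0)\le\wh{\sigma}_t^{(K-1)}(x_{t+1}^0)\le\wh{\sigma}_t^{(K-1)}(x_t^{K-1})=\sigma_t^{K-1}$.

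First I would fix the high-probability event. Besides the usual guarantee $f(x)\in[\wh{f}_s^-(x),\wh{f}_s^+(x)]$ for all $s$ and $x$ — which is exactly what the calibration of $\beta_s$ provides — it carries a companion concentration bound on the drift of the posterior mean. Conditionally on $\mathcal{F}_t$, the information available at the start of iteration $t$, the whole batch $\{x_t^k\}_{k<K}$ is deterministic (the variance updates ignore the observed values), the new data are Gaussian, and a short computation gives
\[
\wh{\mu}_{t+1}(x)\mid\mathcal{F}_t\ \sim\ \N\!\big(\wh{\mu}_t(x),\ \wh{\sigma}_t^2(x)-\wh{\sigma}_{t+1}^2(x)\big)~.
\]
A Gaussian tail bound, with the same union bound (finite $\X$) or discretization (compact $\X$) device used to set $\beta_s$, then yields, with probability at least $1-\delta$, that for all $t<T$ and all $x$,
\[
\wh{\mu}_{t+1}(x)\ \le\ \wh{\mu}_t(x)+\sqrt{\beta_{t+1}}\,\wh{\sigma}_t(x)~,
\]
using $\wh{\sigma}_t^2(x)-\wh{\sigma}_{t+1}^2(x)\le\wh{\sigma}_t^2(x)$.

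Next I would prove $x_{t+1}^0\in\mathfrak{R}_t^+$ on this event. Combining the drift bound with $\wh{\sigma}_{t+1}\le\wh{\sigma}_t$ and the definition of $\wh{f}_{t+1}^+$ gives, for every $x$,
\[
\wh{f}_{t+1}^+(x)=\wh{\mu}_{t+1}(x)+\sqrt{\beta_{t+1}}\,\wh{\sigma}_{t+1}(x)\ \le\ \wh{\mu}_t(x)+2\sqrt{\beta_{t+1}}\,\wh{\sigma}_t(x)~,
\]
which is precisely why the coefficient $2\sqrt{\beta_{t+1}}$ appears in Eq.\ref{eq:r}. On the other hand, since $x_{t+1}^0$ maximizes $\wh{f}_{t+1}^+$ over $\X$,
\[
\wh{f}_{t+1}^+(x_{t+1}^0)\ \ge\ \wh{f}_{t+1}^+(x^\star)\ \ge\ f(x^\star)\ \ge\ f(x_t^\bullet)\ \ge\ \wh{f}_t^-(x_t^\bullet)\ =\ y_t^\bullet~,
\]
the second and fourth inequalities using the confidence event and the third using optimality of $x^\star$. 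Evaluating the preceding display at $x=x_{t+1}^0$ yields $\wh{\mu}_t(x_{t+1}^0)+2\sqrt{\beta_{t+1}}\,\wh{\sigma}_t(x_{t+1}^0)\ge y_t^\bullet$, i.e. $x_{t+1}^0\in\mathfrak{R}_t^+$, and feeding this into the first paragraph completes the argument.

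I expect the main obstacle to be the drift concentration together with the bookkeeping that keeps it on the same $1-\delta$ event as the confidence region (and, in the compact case, the discretization needed to control $\wh{\mu}_{t+1}-\wh{\mu}_t$ uniformly over $\X$); the remaining order-theoretic steps are routine. This is also the step that pins down the constant $2$: with only the cruder bound $\lvert\wh{\mu}_{t+1}(x)-f(x)\rvert\le\sqrt{\beta_{t+1}}\,\wh{\sigma}_{t+1}(x)$ and $f(x)\le\wh{f}_t^+(x)$ one would lose a factor and be forced to use $3\sqrt{\beta_{t+1}}$ in $\mathfrak{R}_t^+$, so the Gaussian drift estimate is genuinely what buys the tighter region.
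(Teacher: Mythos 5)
Your proof is correct and follows the same skeleton as the paper's: show $x_{t+1}^0\in\mathfrak{R}_t^+$ \emph{whp}, then combine the argmax property of $x_t^{K-1}$ over $\mathfrak{R}_t^+$ with the monotonicity of the posterior variance (``information never hurts''). The one place you diverge is that the paper passes from $\wh{f}_{t+1}^+(x_{t+1}^0)\geq y_t^\bullet$ to $x_{t+1}^0\in\mathfrak{R}_t^+$ with a bare ``and thus,'' whereas you supply the missing ingredient: the Gaussian drift bound $\wh{\mu}_{t+1}(x)\mid\mathcal{F}_t\sim\N\big(\wh{\mu}_t(x),\wh{\sigma}_t^2(x)-\wh{\sigma}_{t+1}^2(x)\big)$, which together with $\wh{\sigma}_{t+1}\leq\wh{\sigma}_t$ gives $\wh{f}_{t+1}^+(x)\leq\wh{\mu}_t(x)+2\sqrt{\beta_{t+1}}\,\wh{\sigma}_t(x)$ and hence exactly accounts for the coefficient $2$ in Eq.~(\ref{eq:r}). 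Your remark that sandwiching through the confidence intervals alone would only yield the coefficient $3$ is accurate, so your drift argument is genuinely needed (at the cost of one more event in the union bound, which only perturbs $\beta_t$ by a constant); your chain $\wh{f}_{t+1}^+(x_{t+1}^0)\geq\wh{f}_{t+1}^+(x^\star)\geq f(x^\star)\geq f(x_t^\bullet)\geq y_t^\bullet$ is an equally valid variant of the paper's $\wh{f}_{t+1}^+(x_{t+1}^0)\geq\wh{f}_{t+1}^+(x_t^\bullet)\geq\wh{f}_t^-(x_t^\bullet)$.
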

\begin{proof}
By the definitions of $x_{t+1}^0$ (Eq.\ref{eq:argmax_f}),
we have $\wh{f}_{t+1}^+(x_{t+1}^0) \geq \wh{f}_{t+1}^+(x_t^\bullet)$.
Then, we know with high probability that
$\forall x \in \X, \forall t<T,\ \wh{f}_{t+1}^+(x) \geq \wh{f}_t^-(x)$.
We can therefore claim \textit{whp}
$\wh{f}_{t+1}^+(x_{t+1}^0) \geq y_t^\bullet$,
and thus that $x_{t+1}^0 \in \mathfrak{R}_t^+$ \textit{whp}.

We have as a result by the definition of $x_t^{k-1}$ (Eq.\ref{eq:argmax_s})
that $\wh{\sigma}_t^{(k-1)}(x_{t+1}^0) \leq \wh{\sigma}_t^{(k-1)}(x_t^{k-1})$ \textit{whp}.
Using the ``Information never hurts'' principle \cite{krause2005},
we know that the entropy of $f(x)$ for all location $x$
decreases while we observe points $x_t$.
For GP, the entropy is also a non-decreasing function of the variance,
so that :
\begin{align*}
\forall x \in \X,\ \wh{\sigma}_{t+1}^{(0)}(x) \leq \wh{\sigma}_t^{(k-1)}(x) ~.
\end{align*}
We thus prove ${\sigma_{t+1}^0 \leq \sigma_t^{k-1}}$.
\end{proof}

\begin{lemma}
\label{lem:dev_ucb}
The sum of the deviations of the points selected by the \textsf{UCB} policy
are bounded by the one for all the selected points divided by $K$, \textit{whp},
\[\sum_{t=0}^{T-1} \sigma_t^0 \leq \frac{1}{K} \sum_{t=0}^{T-1} \sum_{k=0}^{K-1} \sigma_t^k ~.\]
\end{lemma}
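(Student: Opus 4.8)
The plan is to combine two monotonicity properties of the posterior deviations: a \emph{within-batch} one, stating that the deviations of the \textsf{PE} queries of a given batch are non-increasing in $k$, and the \emph{across-batch} inequality $\sigma_{t+1}^0\le\sigma_t^{K-1}$ of Lemma~\ref{lem:s_0_s_k}. Together these show that the deviation $\sigma_{t+1}^0$ of a \textsf{UCB} query is dominated by \emph{each} of the $K-1$ \textsf{PE} deviations $\sigma_t^1,\dots,\sigma_t^{K-1}$ of the previous batch, and the claimed bound then follows by collecting terms. Everything below is understood on the probability-$(1-\delta)$ event on which the confidence intervals of Section~\ref{sec:regret} hold, which is what makes legitimate the region memberships used from Lemma~\ref{lem:s_0_s_k}.

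First I would prove the within-batch bound: for $1\le k\le j\le K-1$ one has $\sigma_t^j\le\sigma_t^k$. Indeed $\sigma_t^j=\wh{\sigma}_t^{(j)}(x_t^j)\le\wh{\sigma}_t^{(k)}(x_t^j)$, because passing from the $k$-th to the $j$-th greedy step only conditions on additional observations and the GP posterior variance is non-increasing under conditioning (the ``information never hurts'' principle, \cite{krause2005}); and $\wh{\sigma}_t^{(k)}(x_t^j)\le\max_{x\in\mathfrak{R}_t^+}\wh{\sigma}_t^{(k)}(x)=\sigma_t^k$ since $x_t^j\in\mathfrak{R}_t^+$ by Eq.~\ref{eq:argmax_s}, so $x_t^j$ is a feasible competitor in the maximisation defining $x_t^k$. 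In particular $\sigma_t^{K-1}\le\sigma_t^k$ for every $1\le k\le K-1$, and combining with Lemma~\ref{lem:s_0_s_k} yields, for every iteration $t$ and every $1\le k\le K-1$, the key estimate $\sigma_{t+1}^0\le\sigma_t^k$, hence $(K-1)\,\sigma_{t+1}^0\le\sum_{k=1}^{K-1}\sigma_t^k$.

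It then remains to sum. For $1\le t\le T-1$ write $K\,\sigma_t^0=(K-1)\,\sigma_t^0+\sigma_t^0\le\sum_{k=1}^{K-1}\sigma_{t-1}^k+\sigma_t^0$; summing over $t=1,\dots,T-1$ and adding $K\sigma_0^0$ to both sides gives $K\sum_{t=0}^{T-1}\sigma_t^0\le K\sigma_0^0+\sum_{t=1}^{T-1}\sigma_t^0+\sum_{t=0}^{T-2}\sum_{k=1}^{K-1}\sigma_t^k\le\sum_{t=0}^{T-1}\sum_{k=0}^{K-1}\sigma_t^k+(K-1)\,\sigma_0^0$, where the last inequality holds term by term, using $K\sigma_0^0=\sigma_0^0+(K-1)\sigma_0^0$ and discarding the non-negative leftover $\sum_{k=1}^{K-1}\sigma_{T-1}^k$ on the right. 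Dividing by $K$ proves the lemma up to the extra term $\frac{K-1}{K}\sigma_0^0$; this is harmless, since at $t=0$ no observation has been made yet, so $\sigma_0^0=\wh{\sigma}_0(x_0^0)=\sqrt{k(x_0^0,x_0^0)}\le1$, an $\O(1)$ additive constant absorbed into the term $C_2$ of Theorem~\ref{thm:regret}.

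The only conceptually delicate input is $\sigma_{t+1}^0\le\sigma_t^{K-1}$, i.e.\ the fact that $x_{t+1}^0\in\mathfrak{R}_t^+$: this is exactly why the relevant region must be inflated to $\mathfrak{R}_t^+$, with radius $2\sqrt{\beta_{t+1}}$ rather than $\sqrt{\beta_t}$, so that the queries made at iteration $t$ provably constrain the \textsf{UCB} choice at iteration $t+1$; it is already handled by Lemma~\ref{lem:s_0_s_k}. The remaining steps — the within-batch monotonicity and the counting — are routine, the only real care being the bookkeeping of the $t=0$ boundary term above.
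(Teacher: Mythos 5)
Your proof is correct and follows essentially the same route as the paper: the key inequality $\sigma_{t+1}^0\le\sigma_t^k$ for all $k\ge 1$ (obtained from Lemma~\ref{lem:s_0_s_k} together with the fact that each $x_t^j$ is a feasible competitor in the maximization defining $x_t^k$), followed by summation over $k$ and then over $t$. The one place you diverge is the $t=0$ boundary term, and there you are actually more careful than the paper: the summation genuinely leaves a residual $\frac{K-1}{K}\sigma_0^0$ on the right-hand side, which the paper's appeal to ``$\sigma_0^0\ge 0$'' does not remove (that sign points the wrong way), whereas you correctly bound it by $1$ using $k(x,x)\le 1$ and propose absorbing it into the additive constant of Theorem~\ref{thm:regret}. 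So you prove the displayed inequality only up to an $\O(1)$ additive term rather than exactly as stated, but this is the honest version of the argument and is harmless downstream, provided the finite-$\X$ branch of Lemma~\ref{lem:end_proof} is also written with the $+C_2$ term.
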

\begin{proof}
Using Lemma \ref{lem:s_0_s_k} and the definitions of the $x_t^k$,
we have that $\sigma_{t+1}^0 \leq \sigma_t^k$ for all $k\geq 1$.
Summing over $k$, we get for all $t\geq 0$,
$\sigma_t^0 + (K-1) \sigma_{t+1}^0 \leq \sum_{k=0}^{K-1} \sigma_t^k$.
Now, summing over $t$ and with $\sigma_0^0\geq 0$ and $\sigma_T^0\geq 0$,
we obtain the desired result.
\end{proof}

Next, we can bound the sum of all posterior variances $(\sigma_t^k)^2$ via the maximum information gain
for a sequence of $TK$ locations.
\begin{lemma}
\label{lem:gamma_tk}
The sum of the variances of the selected points are bounded by a constant factor times $\gamma_{TK}$,
$\exists C_1' \in \R,~ \sum_{t<T} \sum_{k<K} (\sigma_t^k)^2 \leq C_1' \gamma_{TK}$
where $\gamma_{TK}$ is the maximum information gain obtainable by a sequential procedure of length $TK$.
\end{lemma}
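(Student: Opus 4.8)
The plan is to reproduce the by-now classical bridge between posterior variances and the information gain of a Gaussian process (as in Srinivas et al.\ and Desautels et al.). First I would fix the order in which the $TK$ queries are issued, namely $x_0^0, x_0^1,\dots,x_0^{K-1}, x_1^0,\dots, x_{T-1}^{K-1}$, which is exactly the order in which Algorithm~\ref{alg:algo} commits them. The structural observation that makes everything work is the one already used in the description of \textsf{GP-UCB-PE}: the GP posterior variance at a point depends only on the \emph{locations} of the conditioning set, not on the observed values. Hence $\sigma_t^k=\wh{\sigma}_t^{(k)}(x_t^k)$ is precisely the posterior standard deviation of $f(x_t^k)$ given every query that precedes $x_t^k$ in the above order (all fully observed earlier batches, plus the $k$ points $x_t^0,\dots,x_t^{k-1}$ already committed in batch $t$), and this is what legitimises chaining the $\sigma_t^k$ even though the values inside the current batch are not yet available.

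Next I would invoke the chain-rule expansion of the information gain of a jointly Gaussian vector. Writing $\mat{X}^\star=\{x_t^k\}_{t<T,k<K}$ for the queried locations and $\mat{K}$ for their kernel matrix, one has
\[ I_0(\mat{X}^\star) \;=\; \tfrac12\log\det\!\big(\mat{I}+\sigma^{-2}\mat{K}\big) \;=\; \sum_{t<T}\sum_{k<K}\tfrac12\log\!\big(1+\sigma^{-2}(\sigma_t^k)^2\big), \]
the second equality being the successive-conditioning (Cholesky) expansion of the log-determinant along the chosen order. Then I would use the elementary scalar inequality $u\le \frac{\sigma^{-2}}{\log(1+\sigma^{-2})}\log(1+u)$, valid for all $u\in[0,\sigma^{-2}]$ since $u/\log(1+u)$ is nondecreasing. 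Because $k(x,x)\le 1$ by hypothesis we have $(\sigma_t^k)^2\le k(x_t^k,x_t^k)\le 1$, so $u=\sigma^{-2}(\sigma_t^k)^2\in[0,\sigma^{-2}]$ and
\[ (\sigma_t^k)^2 \;\le\; \frac{1}{\log(1+\sigma^{-2})}\,\log\!\big(1+\sigma^{-2}(\sigma_t^k)^2\big). \]
Summing over $t$ and $k$ and substituting the identity above yields $\sum_{t<T}\sum_{k<K}(\sigma_t^k)^2 \le \frac{2}{\log(1+\sigma^{-2})}\,I_0(\mat{X}^\star)$, and finally $I_0(\mat{X}^\star)\le\gamma_{TK}$ since $\mat{X}^\star$ is a sequence of $TK$ evaluations. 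This gives the claim with $C_1'=\frac{2}{\log(1+\sigma^{-2})}$, a dimension-free constant.

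The only delicate point — and the one I expect to be the main obstacle — is the bookkeeping in the chain-rule step: one has to state carefully what each $\sigma_t^k$ is conditioned on, and check that the argument is insensitive to the order within a batch and to whether a batch's values have been folded into the posterior at the moment its points are selected (they need not be, for the variance recursion). Once that identification is pinned down, the remainder is just the log-determinant expansion together with the scalar inequality, and the constant $C_1'$ falls out immediately; matching it to the $C_1$ appearing in Theorem~\ref{thm:regret} is then a routine matter of carrying through the factor from Lemma~\ref{lem:reg}.
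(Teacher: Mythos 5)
Your proof is correct and takes essentially the same route as the paper, which simply defers to Lemmas 5.3 and 5.4 of Srinivas et al.\ for the chain-rule expansion $I_0(\mat{X})=\frac12\sum\log\bigl(1+\sigma^{-2}(\sigma_t^k)^2\bigr)$ and the scalar inequality, together with the same key remark that the posterior variance depends only on the query locations (so the within-batch conditioning is legitimate even before the batch is observed). You recover the paper's constant $C_1'=\frac{2}{\log(1+\sigma^{-2})}$ exactly; the only difference is that you spell out the steps the paper cites.
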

\begin{proof}
We know that the information gain for a sequence of $T$ locations $x_t$
can be expressed in terms of the posterior variances $(\wh{\sigma}_{t-1}(x_t))^2$.
The deviations $\sigma_t^k$ being independent of the observations $y_t^k$,
the same equality holds for the posterior variances $(\wh{\sigma}^{(k)}_t(x_t^k))^2$.
See Lemmas 5.3 and 5.4 in \cite{srinivas2012} for the detailed proof,
giving $C_1'=\frac{2}{\log(1+\sigma^{-2})}$.
\end{proof}

\begin{lemma}
\label{lem:end_proof}
The cumulative regret can be bound in terms of the maximum information gain, \textit{whp},
$\exists C_1, C_2 \in \R$,
\[\sum_{t < T} r_t^K \leq \sqrt{\frac{T}{K} C_1 \beta_T \gamma_{TK} + C_2} ~.\]
\end{lemma}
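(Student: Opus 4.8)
The plan is to obtain the bound by concatenating the four preceding lemmas, all read on the single event $\mathcal{E}$ of probability at least $1-\delta$ on which $f(x)\in[\wh{f}_t^-(x),\wh{f}_t^+(x)]$ holds simultaneously for every $x\in\X$ and every $t<T$; this is exactly the event secured by the calibration of $\beta_t$, and once we condition on it every remaining step is deterministic. So after this one probabilistic reduction the argument is a short chain of elementary inequalities closed off by a single Cauchy--Schwarz estimate over the $TK$ points sampled by \textsf{GP-UCB-PE}.

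In detail, I would first apply Lemma~\ref{lem:reg} termwise and sum over $t<T$: since $t\mapsto\beta_t$ is non-decreasing we may replace each $\sqrt{\beta_t}$ by $\sqrt{\beta_T}$, and $\sum_{t\geq1}t^{-2}=\pi^2/6$ bounds the sum of the correction terms of the compact convex case, giving $\sum_{t<T}r_t^K\leq 2\sqrt{\beta_T}\sum_{t<T}\sigma_t^0$ (plus an additive term $\leq\pi^2/6$ in the compact convex case). Next, Lemma~\ref{lem:dev_ucb} lets me pass from $\sum_{t<T}\sigma_t^0$ to $\frac1K\sum_{t<T}\sum_{k<K}\sigma_t^k$; then Cauchy--Schwarz bounds $\sum_{t<T}\sum_{k<K}\sigma_t^k$ by $\sqrt{TK}\,\bigl(\sum_{t<T}\sum_{k<K}(\sigma_t^k)^2\bigr)^{1/2}$; and Lemma~\ref{lem:gamma_tk} controls the sum of squares by $C_1'\gamma_{TK}$ with $C_1'=\frac{2}{\log(1+\sigma^{-2})}$. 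Chaining these three substitutions yields $\sum_{t<T}r_t^K\leq \frac{2\sqrt{\beta_T}}{K}\sqrt{TK\,C_1'\gamma_{TK}}=2\sqrt{\frac{T}{K}\,C_1'\beta_T\gamma_{TK}}$ (again plus $\pi^2/6$ in the compact convex case). This is precisely where the $\sqrt{K}$ gain over the purely sequential bound appears: the factor $1/K$ produced by Lemma~\ref{lem:dev_ucb} meets the factor $\sqrt{TK}$ produced by Cauchy--Schwarz, leaving $\sqrt{T/K}$.

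It then only remains to repackage the right-hand side as a single radical of the stated shape. In the finite case this is immediate, since $2\sqrt{\frac{T}{K}C_1'\beta_T\gamma_{TK}}=\sqrt{\frac{T}{K}(4C_1')\beta_T\gamma_{TK}}$, so one takes $C_2=0$ and $C_1$ a fixed multiple of $1/\log(1+\sigma^{-2})$ as in Theorem~\ref{thm:regret}; in the compact convex case one uses an elementary inequality such as $(u+v)^2\leq\frac32u^2+3v^2$ with $u=2\sqrt{\frac{T}{K}C_1'\beta_T\gamma_{TK}}$ and $v=\pi^2/6$ to land on $\sqrt{\frac{T}{K}C_1\beta_T\gamma_{TK}+C_2}$ with $C_1,C_2$ absolute constants. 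The step I would be most careful with is not mathematical but bookkeeping: keeping every one of these inequalities asserted on the same event $\mathcal{E}$ so that no further union bound inflates the constants, and making sure the Cauchy--Schwarz step is applied to exactly the $TK$ variances that Lemma~\ref{lem:gamma_tk} controls (applying it to the $T$ numbers $\sigma_t^0$ alone would force a re-derivation of the per-iteration comparison that Lemma~\ref{lem:dev_ucb} already packages). The genuinely substantive content is front-loaded into Lemmas~\ref{lem:s_0_s_k}--\ref{lem:gamma_tk}, in particular the relevant-region argument forcing $x_{t+1}^0\in\mathfrak{R}_t^+$. Once the lemma is in hand, the bound on $R_T^K$ in Theorem~\ref{thm:regret} is just $R_T^K=\sum_{t<T}r_t^K$, and the bound on $R_{TK}$ follows along the same route, now bounding the regret of every query in the batch (not only the \textsf{UCB} query $x_t^0$) by a constant times $\sqrt{\beta_t}$ times its posterior deviation, which costs one extra factor $K$ and a larger multiplicative constant.
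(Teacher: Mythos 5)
Your proposal is correct and follows exactly the same chain as the paper's proof: Lemma~\ref{lem:reg}, then Lemma~\ref{lem:dev_ucb}, then Cauchy--Schwarz over the $TK$ variances, then Lemma~\ref{lem:gamma_tk}, yielding $C_1=4C_1'=\frac{4}{\log(1+\sigma^{-2})}$. The only difference is that you spell out the absorption of the $\sum_t t^{-2}$ correction into the radical for the compact convex case, a step the paper dispatches with ``a similar reasoning gives'' and which your explicit $(u+v)^2\leq\frac32u^2+3v^2$ handling arguably treats more carefully than the stated constants there.
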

\begin{proof}
Using the previous lemmas and the fact that $\beta_t \leq \beta_T$ for all $t\leq T$,
we have in the case of finite $\X$, \textit{whp},
\begin{align*}
  \sum_{t<T} r_t^K &\leq \sum_{t<T} 2 \sqrt{\beta_t} \sigma_t^0 \text{~, by Lemma \ref{lem:reg}}\\
    &\leq 2 \sqrt{\beta_T} \frac{1}{K} \sum_{t<T} \sum_{k<K} \sigma_t^k \text{~, by Lemma \ref{lem:dev_ucb}}\\
    &\leq 2 \sqrt{\beta_T} \frac{1}{K} \sqrt{ TK \sum_{t<T} \sum_{k<K} (\sigma_t^k)^2} \text{~, by Cauchy-Schwarz}\\
    &\leq 2 \sqrt{\beta_T} \frac{1}{K} \sqrt{TK C'_1 \gamma_{TK}} \text{~, by Lemma \ref{lem:gamma_tk}}\\
    &\leq \sqrt{ \frac{T}{K} C_1 \beta_T \gamma_{TK}} \text{ with } C_1=\frac{4}{\log(1+\sigma^{-2})}~.
\end{align*}
For compact and convex $\X$, a similar reasoning gives :
\[R_T^K \leq \sqrt{\frac{T}{K} C_1 \beta_T \gamma_{TK} + C_2} \text{ with } C_2 = \frac{\pi}{\sqrt{6}} < 2~.\]
\end{proof}

Lemma \ref{lem:end_proof} conclude the proof of Theorem \ref{thm:regret} for the regret $R_T^K$.
The analysis for $R_{TK}$ is simpler,
using the Lemma \ref{lem:regret_ucb} which bounds the regret for the Pure Exploration queries,
leading to $C_1=\frac{36}{\log(1+\sigma^{-2})}$.
\begin{lemma}
\label{lem:regret_ucb}
The regret for the queries $x_t^k$ selected by Pure Exploration in $\mathfrak{R}_t^+$ are
bounded \textit{whp} by, $6 \sqrt{\beta_t} \wh{\sigma}_t(x_t^k)$.
\end{lemma}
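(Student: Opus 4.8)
The plan is to control the simple regret $r_t^{(k)} = f(x^\star) - f(x_t^k)$ of a Pure Exploration query by the width of the high-confidence band restricted to the relevant region, measured against the current lower bound $y_t^\bullet$ on $f(x^\star)$, and then to exploit that $x_t^k$ sits at the location of largest posterior deviation inside $\mathfrak{R}_t^+$.

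First I would record two membership facts, valid on the event of probability at least $1-\delta$ on which $f(x)\in[\wh{f}_t^-(x),\wh{f}_t^+(x)]$ for all $x$ and $t$. On one hand $x^\star\in\mathfrak{R}_t\subseteq\mathfrak{R}_t^+$: indeed $\wh{f}_t^+(x^\star)\geq f(x^\star)\geq f(x_t^\bullet)\geq\wh{f}_t^-(x_t^\bullet)=y_t^\bullet$, and the inclusion holds because $2\sqrt{\beta_{t+1}}\geq\sqrt{\beta_t}$ and $\wh{\sigma}_t\geq 0$. On the other hand $x_t^k\in\mathfrak{R}_t^+$ directly by the definition of the \textsf{PE} rule (Eq.~\ref{eq:argmax_s}).

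Next I would split $r_t^{(k)}$ through $y_t^\bullet$. For the part above $y_t^\bullet$, whp $f(x^\star)\leq\wh{f}_t^+(x^\star)$ while $y_t^\bullet=\max_x\wh{f}_t^-(x)\geq\wh{f}_t^-(x^\star)=\wh{f}_t^+(x^\star)-2\sqrt{\beta_t}\,\wh{\sigma}_t(x^\star)$, so $f(x^\star)-y_t^\bullet\leq 2\sqrt{\beta_t}\,\wh{\sigma}_t(x^\star)$. For the part below $y_t^\bullet$, membership $x_t^k\in\mathfrak{R}_t^+$ yields $\wh{\mu}_t(x_t^k)\geq y_t^\bullet-2\sqrt{\beta_{t+1}}\,\wh{\sigma}_t(x_t^k)$, hence whp $f(x_t^k)\geq\wh{f}_t^-(x_t^k)=\wh{\mu}_t(x_t^k)-\sqrt{\beta_t}\,\wh{\sigma}_t(x_t^k)\geq y_t^\bullet-(2\sqrt{\beta_{t+1}}+\sqrt{\beta_t})\,\wh{\sigma}_t(x_t^k)$. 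Adding the two parts gives
\[ r_t^{(k)}\ \leq\ 2\sqrt{\beta_t}\,\wh{\sigma}_t(x^\star)+\big(2\sqrt{\beta_{t+1}}+\sqrt{\beta_t}\big)\,\wh{\sigma}_t(x_t^k)~. \]
Since $x^\star$ and $x_t^k$ both lie in $\mathfrak{R}_t^+$ and $x_t^k$ is chosen to maximize the posterior deviation over $\mathfrak{R}_t^+$, the deviation at $x_t^k$ dominates that at $x^\star$; then, using $\beta_t\leq\beta_{t+1}$ and a routine comparison of the $\beta$ factors (so that $2\sqrt{\beta_{t+1}}+3\sqrt{\beta_t}\leq 6\sqrt{\beta_t}$), the right-hand side collapses to $6\sqrt{\beta_t}\,\wh{\sigma}_t(x_t^k)$. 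For compact and convex $\X$ the same discretization argument as in Lemma~\ref{lem:reg} adds a term $\frac{1}{t^2}$.

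The main obstacle is exactly this last reduction: one must keep careful track of which posterior deviation appears --- the pre-batch $\wh{\sigma}_t$ that comes out of the confidence band, versus the within-batch updated deviation $\wh{\sigma}_t^{(k)}(x_t^k)=\sigma_t^k$ against which Lemma~\ref{lem:gamma_tk} bounds the information gain --- and verify that the enlargement of $\mathfrak{R}_t$ to $\mathfrak{R}_t^+$ (the use of $2\sqrt{\beta_{t+1}}$ rather than $\sqrt{\beta_t}$ in Eq.~\ref{eq:r}) provides precisely the slack that the maximality of the \textsf{PE} selection needs to absorb the extra $\sqrt{\beta_t}$ factors, at the price of the constant rising from $2$ (the \textsf{UCB} query, Lemma~\ref{lem:reg}) to $6$. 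With this per-query bound established, summing over $t$ and $k$ as in the proof of Lemma~\ref{lem:end_proof} and applying Cauchy--Schwarz and Lemma~\ref{lem:gamma_tk} yields $R_{TK}\leq\sqrt{C_1\,TK\,\beta_T\,\gamma_{TK}+C_2}$ with $C_1=\frac{36}{\log(1+\sigma^{-2})}$.
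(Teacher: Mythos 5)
Your proof is correct and follows essentially the same route as the paper's: bound $f(x^\star)$ from above through $y_t^\bullet$ using the definition of $x_t^\bullet$, bound $f(x_t^k)$ from below using membership of $x_t^k$ in $\mathfrak{R}_t^+$, trade $\wh{\sigma}_t(x^\star)$ for $\wh{\sigma}_t(x_t^k)$ via the maximality of the \textsf{PE} selection over $\mathfrak{R}_t^+$ (which contains $x^\star$ \textit{whp}), and absorb $2\sqrt{\beta_{t+1}}\leq 3\sqrt{\beta_t}$ to reach the constant $6$. The additive split through $y_t^\bullet$ is just a reorganization of the paper's chained inequalities, and you correctly flag the one delicate point the paper glosses over, namely that the \textsf{PE} rule maximizes the updated deviation $\wh{\sigma}_t^{(k)}$ rather than the pre-batch $\wh{\sigma}_t$.
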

\begin{proof}
  As in Lemma \ref{lem:reg}, we have \textit{whp}, for all $t\leq T$ and $k\geq 1$,
  \begin{align*}
    r_t^{(k)} &\leq \wh{\mu}_t(x^\star)+\sqrt{\beta_t}\wh{\sigma}_t(x^\star) - \wh{\mu}_t(x_t^k)+\sqrt{\beta_t}\wh{\sigma}_t(x_t^k)\\
    &\leq \wh{f}_t^-(x^\bullet_t) + 2\sqrt{\beta_t}\wh{\sigma_t}(x^\star) - \wh{\mu}_t(x_t^k)+\sqrt{\beta_t}\wh{\sigma}_t(x_t^k)
     \text{ by definition of } x^\bullet_t\\
    &\leq \wh{\mu}_t(x_t^k) + 2\sqrt{\beta_{t+1}}\wh{\sigma}_t(x_t^k) + 2\sqrt{\beta_t}\wh{\sigma}_t(x_t^k) - \wh{\mu}_t(x_t^k)+\sqrt{\beta_t}\wh{\sigma}_t(x_t^k)
     \text{ by definition of } \mathfrak{R}_t^+\\
    &\leq 3\sqrt{\beta_t}\wh{\sigma}_t(x_t^k) + 2\sqrt{\beta_t}\wh{\sigma}_t(x_t^k) + \sqrt{\beta_t}\wh{\sigma}_t(x_t^k)
     \text{ by definition of } \beta_{t+1}\\
    &\leq 6\sqrt{\beta_t}\wh{\sigma}_t(x_t^k)~.
  \end{align*}
\end{proof}
To conclude the analysis of $R_{TK}$ and prove Theorem \ref{thm:regret},
it suffices to use then the last four steps of Lemma \ref{lem:end_proof}.

\section{Experiments}
\label{sec:expes} 

\begin{figure}[t]
  \begin{center}
    \begin{subfigure}[Himmelblau]{
        \includegraphics[width=2in]{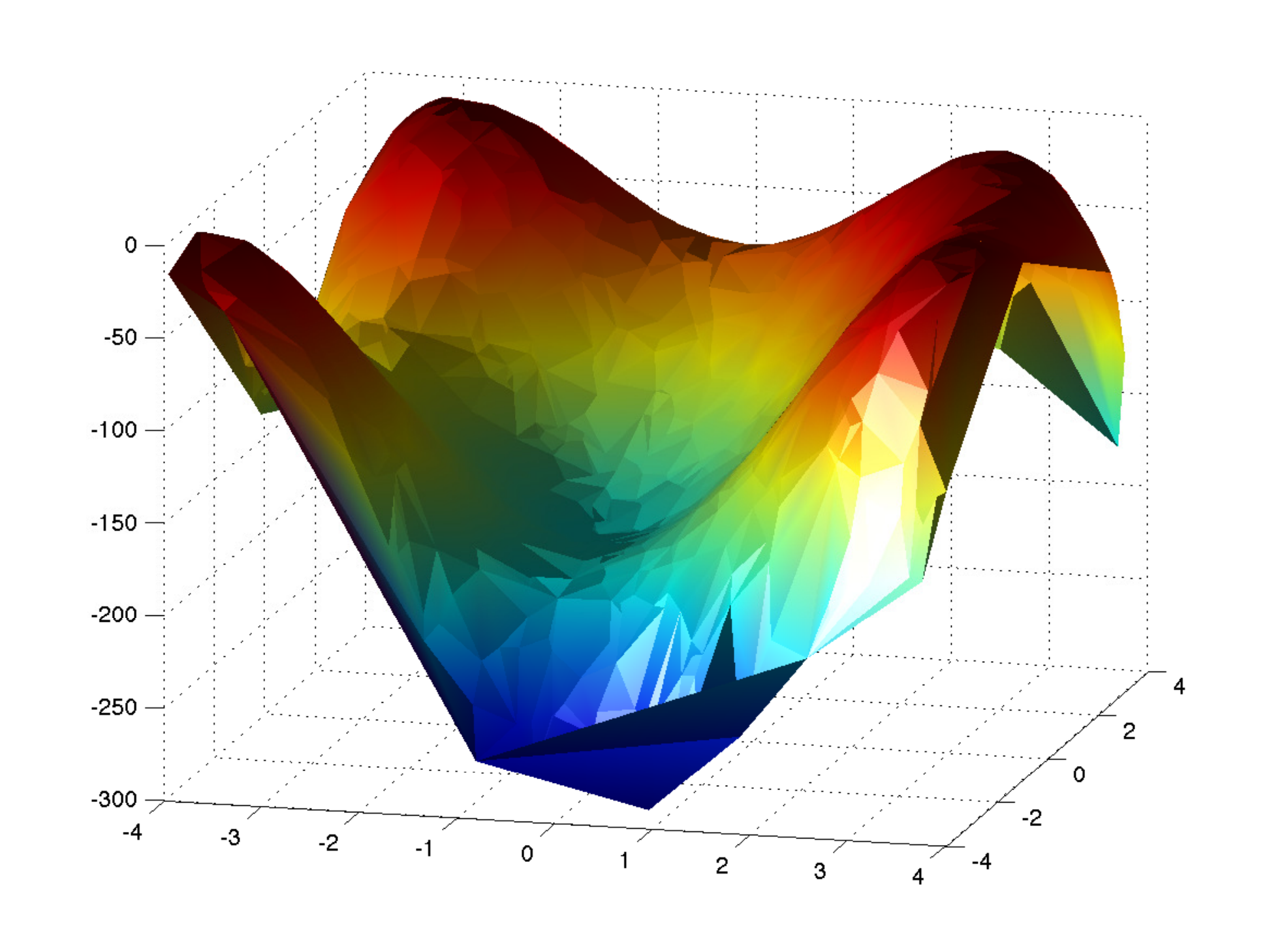}
        \label{fig:himmelblau}}
    \end{subfigure}
    \begin{subfigure}[Gaussian mixture]{
        \includegraphics[width=2in]{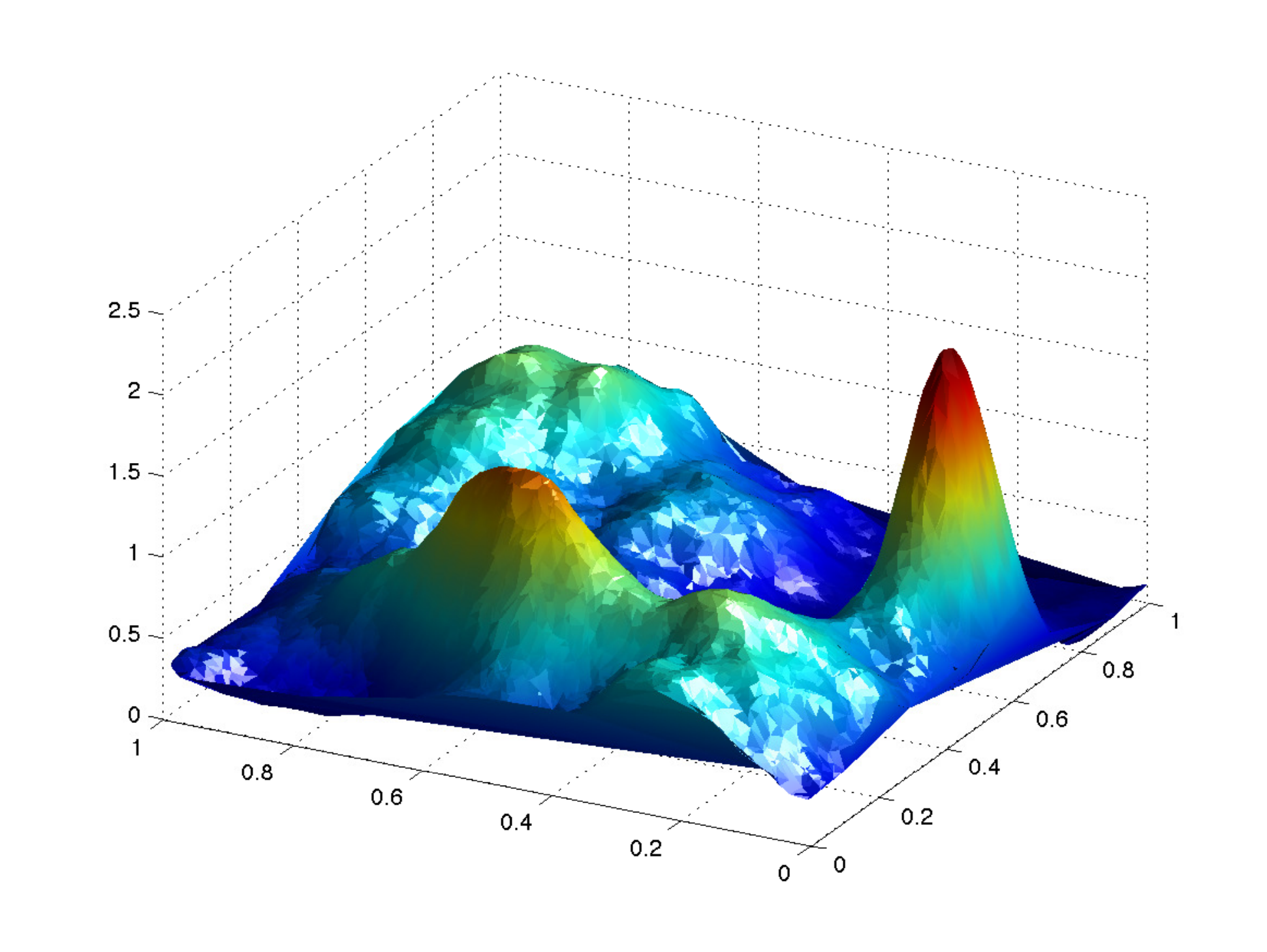}
        \label{fig:gaussian_mix}}
    \end{subfigure}
    \caption{Visualization of the synthetic functions used for assessment}
    \label{fig:data}
  \end{center}
\end{figure} 

\subsection{Protocol}

We compare the empirical performances of our algorithm
against the state of the art of global optimization by batches,
\textsf{GP-BUCB} \cite{desautels2012} and \textsf{SM-UCB} \cite{azimi2010}.
The tasks used for assessment come from
three real applications and two synthetic problems
described here.
The results are shown in Figure \ref{fig:expes}.
For all datasets and algorithms, the size of the batches $K$ was set to $10$
and the learners were initialized with a random subset of $20$ observations $(x_i, y_i)$.
The curves on Figure \ref{fig:expes} show the evolution of the regret $R_t^K$
in term of iteration $t$.
We report the average value with the confidence interval over $64$ experiments.
The parameters for the prior distribution, like the bandwidth of the RBF Kernel,
were chosen by maximization of the marginal likelihood.

\subsection{Description of Data Sets}

\paragraph{Generated GP.}
The \verb/Generated GP/ functions are random GPs drawn from a Mat\'{e}rn kernel (Eq. \ref{eq:matern})
in dimension $2$, with the kernel bandwidth set to $\frac{1}{4}$,
the Mat\'{e}rn parameter $\nu=3$ and noise variance $\sigma^2$ set to $1$.

\paragraph{Gaussian Mixture.}
This synthetic function comes from the addition of three $2$-D Gaussian functions.
at $(0.2, 0.5)$, $(0.9, 0.9)$, and the maximum at $(0.6, 0.1)$.
We then perturb these Gaussian functions with smooth variations
generated from a Gaussian Process with Mat\'{e}rn Kernel and very few noise.
It is shown on Figure \ref{fig:gaussian_mix}.
The highest peak being thin, the sequential search for the maximum of this function
is quite challenging.

\paragraph{Himmelblau Function.}
The \verb/Himmelblau/ task is another synthetic function in dimension $2$.
We compute a slightly tilted version of the Himmelblau's function,
and take the opposite to match the challenge of finding its maximum.
This function presents four peaks but only one global maximum.
It gives a practical way to test the ability of a strategy to manage exploration/exploitation tradeoffs.
It is represented in Figure \ref{fig:himmelblau}.

\paragraph{Mackey-Glass Function.}
The Mackey-Glass delay-differential equation
\footnote{\url{http://www.scholarpedia.org/article/Mackey-Glass_equation}}
is a chaotic system in dimension $6$, but without noise.
It models real feedback systems and is used in physiological domains such as
hematology, cardiology, neurology, and psychiatry.
The highly chaotic behavior of this function makes it an exceptionally difficult optimization problem.
It has been used as a benchmark for example by \cite{flake2002}.

\paragraph{Tsunamis.}
Recent post-tsunami survey data as well as the numerical simulations of \cite{hill2012}
have shown that in some cases the run-up,
which is the maximum vertical extent of wave climbing on a beach,
in areas which were supposed to be protected by small islands in the vicinity of coast,
was significantly higher than in neighboring locations.
Motivated by these observations \cite{stefanakis2012} investigated this phenomenon
by employing numerical simulations using the VOLNA code \cite{dutykh2011}
with the simplified geometry of a conical island sitting on a flat surface in front of a sloping beach.
Their setup was controlled by five physical parameters
and their aim was to find with confidence and with the least number of simulations
the maximum run-up amplification on the beach directly behind the island,
compared with the run-up on a lateral location, not influenced by the presence of the island. 
Since this problem is too complex to treat analytically,
the authors had to solve numerically the Nonlinear Shallow Water Equations.

\begin{figure*}[t]
  \begingroup
  \tikzset{every picture/.style={scale=.62}}
  \centering
  \begin{subfigure}[Generated GP]{
    \includegraphics[width=.28\textwidth]{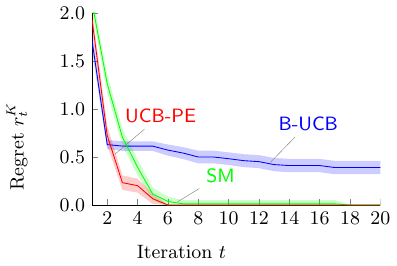}
    \label{fig:expe_gp}}
  \end{subfigure}
  \begin{subfigure}[Himmelblau]{
    \includegraphics[width=.28\textwidth]{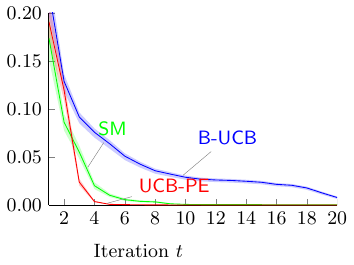}
    \label{fig:expe_himmelblau}}
  \end{subfigure}
  \begin{subfigure}[Gaussian Mixture]{
    \includegraphics[width=.28\textwidth]{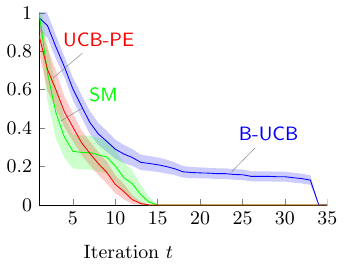}
    \label{fig:expe_gaussian_mix}}
  \end{subfigure}\\
  \begin{subfigure}[Mackey-Glass]{
    \includegraphics[width=.28\textwidth]{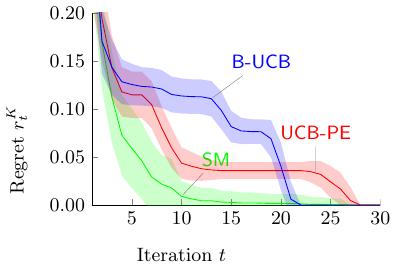}
    \label{fig:expe_mg}}
  \end{subfigure}
  \begin{subfigure}[Tsunamis]{
    \includegraphics[width=.28\textwidth]{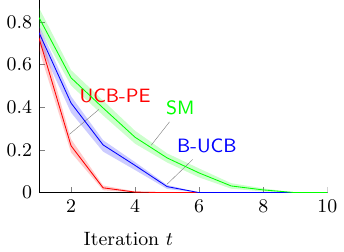}
    \label{fig:expe_tsunamis}}
  \end{subfigure}
  \begin{subfigure}[Abalone]{
    \includegraphics[width=.28\textwidth]{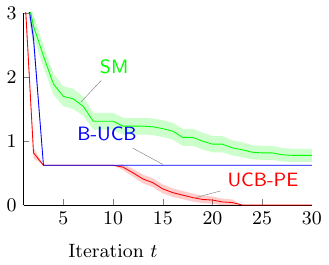}
    \label{fig:expe_abalone}}
  \end{subfigure}\\
  \endgroup
  \caption{Experiments on several real and synthetics tasks.
    The curves show the decay of the mean of the simple regret $r_t^K$ with respect to the iteration $t$,
    over $64$ experiments.
    We show with the translucent area the confidence intervals.
    }
  \label{fig:expes}
\end{figure*}

\paragraph{Abalone.}
The challenge of the \verb/Abalone/ dataset is to predict the age of a specie of sea snails
from physical measurements.
It comes from the study by \cite{nash1994} and it is provided by the UCI Machine Learning Repository.
\footnote{\url{http://archive.ics.uci.edu/ml/datasets/Abalone}}
We use it as a maximization problem in dimension $8$.

\subsection{Comparison of Algorithms}
The algorithm \textsf{SM} ---Simulation Matching--- described in \cite{azimi2010},
with \textsf{UCB} base policy,
has shown similar results to \textsf{GP-UCB-PE} on synthetic functions
(Figures \ref{fig:expe_gp}, \ref{fig:expe_himmelblau}, \ref{fig:expe_gaussian_mix})
and even better results on chaotic problem without noise (Figure \ref{fig:expe_mg}),
but performs worse on real noisy data (Figures \ref{fig:expe_tsunamis}, \ref{fig:expe_abalone}).
On the contrary, the initialization phase of \textsf{GP-BUCB}
leads to good regret on difficult real tasks (Figure \ref{fig:expe_tsunamis}),
but looses time on synthetic Gaussian or polynomial ones
(Figures \ref{fig:expe_gp}, \ref{fig:expe_himmelblau}, \ref{fig:expe_gaussian_mix}).
The number of dimensions of the \verb/Abalone/ task
is already a limitation for \textsf{GP-BUCB} with the RBF kernel,
making the initialization phase time-consuming.
The mean regret for \textsf{GP-BUCB} converges to zero abruptly after the initialization phase at iteration $55$,
and is therefore not visible on Figure \ref{fig:expe_abalone},
as for \ref{fig:expe_gaussian_mix} where its regret decays at iteration $34$.

\textsf{GP-UCB-PE} achieves good performances on both sides.
We obtained better regret on synthetic data
as well as on real problems from the domains of physics and biology.
Moreover, the computation time of \textsf{SM} was two order of magnitude longer than the others.

\section{Conclusion}
We have presented the \textsf{GP-UCB-PE} algorithm
which addresses the problem of finding in few iterations the maximum
of an unknown arbitrary function observed via batches of $K$ noisy evaluations.
We have provided theoretical bounds for the cumulative regret
obtained by \textsf{GP-UCB-PE} in the Gaussian Process settings.
Through parallelization, these bounds improve the ones for the state-of-the-art of sequential GP optimization
by a ratio of $\sqrt{K}$,
and are strictly better than the ones for \textsf{GP-BUCB}, a concurrent algorithm for parallel GP optimization.
We have compared experimentally our method to \textsf{GP-BUCB}
and \textsf{SM-UCB}, another approach for parallel GP optimization lacking of theoretical guarantees.
These empirical results have confirmed the effectiveness of \textsf{GP-UCB-PE}
on several applications.

\paragraph{}
The strategy of combining in the same batch
some queries selected via Pure Exploration is an intuitive idea
that can be applied in many other methods.
We expect for example to obtain similar results with the
Maximum Expected Improvement policy (\textsf{MEI}).
Any proof of regret bound that relies on the fact that the uncertainty decreases with the exploration
should be easily adapted to a paralleled extension with Pure Exploration.

On the other hand, we have observed in practice that the strategies which focus more on exploitation
often lead to faster decrease of the regret,
for example the strategy that uses $K$ times the \textsf{GP-UCB} criterion
with updated variance.
We conjecture that the regret for this strategy is unbounded for general GPs,
justifying the need for the initialization phase of \textsf{GP-BUCB}.
However, it would be relevant to specify formally
the assumptions needed by this greedy strategy to guarantee good performances.

\bibliography{biblio}
\bibliographystyle{splncs}

\end{document}